\documentclass[twoside,11pt]{article}

%

%
%
%
\usepackage{blindtext}
\usepackage{amsthm}
\usepackage{amsmath}
\usepackage{jmlr2e}


\newcommand{\cls}{\mathcal{C}}
\newcommand{\con}{\mathit{conf}}

\newcommand{\zone}{\mathcal{Z}}

\newcommand{\model}{\mathcal{M}}
\newcommand{\sep}{\overline{\mathcal{S}}^\model}
\newcommand{\stab}{\underline{\mathcal{S}}^\model}

\newcommand{\tr}{\mathit{Tr}}
\newcommand{\ts}{\mathit{Ts}}
\newcommand{\vs}{\mathit{Vs}}

\newcommand{\friend}{\mathit{F}_\model}
\newcommand{\nfriend}{\overline{\mathit{F}}_\model}

\newcommand{\dis}{\mathit{d}}
\newcommand{\Dis}{\mathit{D}}

\usepackage{lastpage}
\jmlrheading{23}{2023}{1-\pageref{LastPage}}{1/23; Revised -}{-}{-}{Gabriella Chouraqui et al}

\ShortHeadings{Uncertainty Estimation based on Geometric Separation}{Chouraqui et al.}
\firstpageno{1}

\usepackage{enumitem}
\newlist{steps}{enumerate}{1}
\setlist[steps, 1]{label = Step \arabic*:}




\usepackage{times}
\usepackage{soul}
\usepackage{caption}
\usepackage[utf8]{inputenc}
\usepackage{amsfonts}
\usepackage{listings}

\usepackage{graphicx}
\usepackage[capitalise,noabbrev]{cleveref}
\newtheorem{defn}{Definition}
\newtheorem{prop}{Proposition}

\newtheorem{lema}{Lemma}
\usepackage{ dsfont }
\usepackage{booktabs}
\usepackage[nomargin,inline,final]{fixme}
\fxusetheme{color}
\fxuseenvlayout{color}
\usepackage{multirow}
\usepackage{textcomp}

\usepackage[section]{placeins}
\usepackage{rotating}
\usepackage[table, svgnames, dvipsnames]{xcolor}
\usepackage{hyperref}
\usepackage{caption}
\usepackage{subcaption}
\hypersetup{
  colorlinks=true,
  linkcolor=blue
  citecolor=violet,
  linkcolor=red,
  urlcolor=blue}
  
\definecolor{ao}{rgb}{0.0, 0.0, 1.0}
\definecolor{amethyst}{rgb}{0.6, 0.4, 0.8}
\definecolor{Green}{rgb}{0.55, 0.71, 0.0}
\FXRegisterAuthor{lc}{alc}{\color{magenta}[Liron]}
\FXRegisterAuthor{ge}{age}{\color{ao}[Gil]}
\FXRegisterAuthor{gc}{agc}{\color{amethyst}[Gabriella]}
\FXRegisterAuthor{ll}{all}{\color{Green}[Liel]}


\begin{document}

\title{Uncertainty Estimation based on Geometric Separation}
 \author{\name Gabriella Chouraqui \email chouraga@post.bgu.ac.il \\
      \addr Department of Computer Science\\
      Ben-Gurion University\\
       Israel
       \AND
       \name Liron Cohen \email cliron@bgu.ac.il \\
       \addr Department of Computer Science\\
             Ben-Gurion University\\
       Israel
        \AND
        \name Gil Einziger \email gilein@bgu.ac.il \\
       \addr Department of Computer Science\\
      Ben-Gurion University\\
       Israel
               \AND
        \name Liel Leman \email Leman@post.bgu.ac.il \\
       \addr Department of Computer Science\\
      Ben-Gurion University\\
       Israel
       }

\editor{}
       
\maketitle

\begin{abstract}
In machine learning, accurately predicting the probability that a specific input is correct is crucial for risk management. This process, known as uncertainty (or confidence) estimation, is particularly important in mission-critical applications such as autonomous driving. 
In this work, we put forward a novel geometric-based approach for improving uncertainty estimations in machine learning models. 
Our approach involves using the geometric distance of the current input from existing training inputs as a signal for estimating uncertainty, and then calibrating this signal using standard post-hoc techniques. 
We demonstrate that our method leads to more accurate uncertainty estimations than recently proposed approaches through extensive evaluation on a variety of datasets and models. 
Additionally, we optimize our approach so that it can be implemented on large datasets in near real-time applications, making it suitable for time-sensitive scenarios.

\end{abstract}

\begin{keywords}
  uncertainty estimation, geometric separation, calibration, confidence evaluation.
\end{keywords}

\section{Introduction}





Machine learning models, such as neural networks, random forests, and gradient boosted trees, are widely used in various fields, including computer vision and transportation, and are transforming the field of computer science~\cite{Survey1,transport}. However, the probabilistic nature of classifications made by these models means that misclassifications are inevitable. As a result, estimating the uncertainty for a particular input is a crucial challenge in machine learning. In fact, many machine learning models have some built-in measure of confidence that is often provided to the user for risk management purposes. The field of \emph{uncertainty calibration} aims to improve the accuracy of the confidence estimates made by machine learning models~\cite {pmlr-v70-guo17a}.


 %
 Confidence evaluation, or the model's prediction of its success rate on a specific input, is a crucial aspect of mission-critical machine learning applications, as it provides a realistic estimate of the probability of success for a classification and enables informed decisions about the \emph{current} situation. Even a highly accurate model may encounter an unexpected situation, which can be communicated to the user through confidence estimation. For example, consider an autonomous vehicle using a model to identify and classify traffic signs. The model is very accurate, and in most cases, its classifications are correct with high confidence. However, one day, it encounters a traffic sign that is obscured, e.g.,  by heavy vegetation. In this case, the model's classification is likely to be incorrect.
 Estimating confidence, or uncertainty, is a crucial tool for assessing unavoidable risks, allowing system designers to address these risks more effectively and potentially avoid unexpected and catastrophic consequences. For example, our autonomous vehicle may reduce its speed and activate additional sensors until it reaches higher confidence. 
 Therefore, all popular machine learning models have mechanisms for determining confidence that can be calibrated to maximize the quality of confidence estimates~\cite{Niculescu2005Predicting,CNNCalibration,Ana2019Verified}, and there is ongoing research to calibrate models more effectively and enable more reliable applications~\cite{Survey2,Sun2007}. 

Existing calibration methods can be divided into two categories: post-hoc methods that perform a transformation that maps the raw outputs of classifiers to their expected probabilities~\cite{NEURIPS2019_8ca01ea9,pmlr-v70-guo17a,gupta2021distribution}, and ad-hoc methods that adapt the training process to produce better calibrated models~\cite{ThulasidasanCBB19,pmlr-v97-hendrycks19a}. Post-hoc calibration methods are easier to apply because they do not change the model and do not require retraining. However, ad-hoc methods may lead to better model training in the first place and more reliable models. With the success of both approaches, recent research has focused on using ensemble methods whose estimates are a weighted average of multiple calibration methods~\cite{Ashukha2020Pitfalls,pmlr-v161-ma21a,pmlr-v119-zhang20k,naeini2016binary,naeini2015obtaining}.
Another recent line of work attempts to further refine the uncertainty estimations by refining the grouping of confidence estimations, e.g.,~\cite{ByoundCalibration,grouppingConfidence}.


%
In principle, post-hoc calibration can be viewed as cleaning up a signal, namely the model's original confidence estimate. Interestingly, if we follow this logic, it is clear that the maximal attainable benefit lies in the quality of the signal. To see this, consider a model that plots the same confidence for all inputs. In this case, the best result that can be achieved is to set that confidence to the model's average accuracy over all inputs. Therefore, finding better signals to calibrate is a promising direction for research.

In this work, we introduce a novel approach for improving uncertainty estimates in machine learning models \emph{using geometry}. 
%
We first provide an algorithm for calculating the maximal geometric \emph{separation} of an input.
However, calculating the geometric separation of an input requires evaluating the whole space of training inputs, making it a computationally expensive method that is not always feasible.  
Therefore, we suggest multiple methods to accelerate the process, including a lightweight approximation called \emph{fast-separation} and several data reduction methods that shorten the geometric calculation. 


We demonstrate that using our geometric-based method, combined with a standard calibration method, leads to more accurate confidence estimations than calibrating the model's original signal across different models and datasets. 
Even more, our approach yields better estimation even when compared to state-of-the-art calibration methods~\cite{Ana2019Verified,gupta2021distribution,pmlr-v70-guo17a,pmlr-v119-zhang20k,naeini2015obtaining,kull2017beta}. 
Additionally, we show that our approach can be implemented in near real-time on a variety of datasets through the use of multiple levels of approximation and optimization. This is particularly useful for practical applications that require rapid decision-making, such as autonomous driving.
The entire code is available at our Github~\cite{Code}.

\section{Related Work}



As mentioned above, uncertainty calibration is about estimating the model's success probability of classifying a given example. Post-hoc calibration methods apply some transformation to the model's confidence (without changing the model) such transformations include Beta calibration (Beta)~\cite{kull2017beta}, Platt scaling (Platt)~\cite{platt1999probabilistic}, Temperature Scaling (TS)~\cite{pmlr-v70-guo17a,NEURIPS2019_8ca01ea9}, Ensemble Temperature Scaling (ETS)~\cite{pmlr-v119-zhang20k}, and cubic spline~\cite{gupta2021distribution}.
In brief, these methods are limited by the best learnable mapping between the model's confidence estimations, and the actual confidence. That is, post-hoc calibration map each confidence value to another calibrated value whereas our method introduces a new signal that can be calibrated just like the model's original signal.  Another work that uses a geometric distance in this context is \cite{Dalitz09Reject}. There, the confidence score is computed directly from the geometric distance, while we first fit a function on a subset of the data to learn the specific behavior of the dataset and model. Moreover, the work in~\cite{Dalitz09Reject} only applies to the k-nearest neighbor model, while our method is applicable to all models.

The recently proposed Scaling Binning Calibrator (SBC) of~\cite{Ana2019Verified} uses a fitting function on the confidence values, divides the inputs into bins of equal size, and outputs the function's average in each bin.  Histogram Binning (HB) \cite{gupta2021distribution} uses a similar idea but divides the inputs into uniform-mass (rather than equal-size) bins.
Interestingly, while most post-hoc calibration methods are model agnostic, recent methods have begun to look at a neural network non-probabilistic output called logits~(before applying softmax)~\cite{CNNCalibration,Ding2020,wenger2019}. Thus, some new post-hoc calibration methods apply only to neural networks.

Ensemble methods are similar to post-hoc calibration methods as they do not change the model, but they consider multiple signals to determine the model's confidence~\cite{Ashukha2020Pitfalls,pmlr-v161-ma21a}. 
For example, Bayesian Binning into 
Quantiles (BBQ)~\cite{naeini2015obtaining} is an extension of HB that uses multiple histogram binning models with different bin numbers, and partitions then outputs scores  according to Bayesian averaging.
The same methodology of Bayesian averaging is applied in Ensemble of Near Isotonic Regression~\cite{naeini2016binary},
but instead of histogram binning, they use nearly isotonic regression models.


Ad-hoc calibration is about training models in new manners aimed to yield better uncertainty estimations. Important techniques in this category include mixup training~\cite{ThulasidasanCBB19}, pre-training~\cite{pmlr-v97-hendrycks19a},  
label-smoothing~\cite{NEURIPS2019_f1748d6b}, data augmentation \cite{Ashukha2020Pitfalls},  self-supervised learning~\cite{NEURIPS2019_a2b15837}, Bayesian approximation (MC-dropout)~\cite{pmlr-v48-gal16,NIPS2017_84ddfb34},  Deep Ensemble (DE)~\cite{DeepEnsembles}, Snapshot
Ensemble~\cite{SnapshotEnsemble}, Fast Geometric Ensembling (FGE)~\cite{FastEnsembling}, and SWA-Gaussian (SWAG)~\cite{SWAG}. 
A notable approach is to use geometric distances in the loss function while training the model~\cite{Xing2020distance}. The authors work with a representation space that maximizes intra-class distances, minimizes inter-class distances, and uses the distances to estimate the confidence. 
Ad-hoc calibration is perhaps the best approach in public as it tackles the core of models' calibration directly. However, because it offers specific training methods, it is of less use to large and already trained models, and the impact of each workshop is limited to a specific model type (e.g., DNNs in~\cite{FastEnsembling}). In comparison, post-hoc and ensemble methods (and our own method) often work for numerous models.

Our geometric method is largely inspired by the approach of robustness proving in machine learning models. In this field, formal methods are used to prove that specific inputs are robust to small adversarial perturbations. That is, we formally prove that all images in a certain geometric radius around a specific train-set image receive the same classification~\cite{mooly, KatzBDJK17,marta,Gehr2018AISA,Ehlers17,DBLP:conf/aaai/EinzigerGSS19}. These works 
rely on formal methods produced in an offline manner and thus apply only to training set inputs (known apriori). Whereas confidence estimation reasons about the current input. However, the underlying intuition, i.e., that geometrically similar inputs should be classified in the same manner is also common to our work. 

%
Indeed, our work shows that geometric properties of the inputs can help us quantify the uncertainty in certain inputs and that, in general, inputs that are less geometrically separated and are 'on the edge' between multiple classifications are more error-prone than inputs that are highly separated from other classes. Thus our work reinforces the intuition behind applying formal methods to prove robustness and supports the intuition that more robust training models would be more dependable. 


\section{Geometric Separation}
In this section, we define a geometric separation measure that reasons about the distance of a given input from other inputs with different classifications. Our end goal is to use this measure to provide confidence estimations. Formally, a model receives a data input, $x$, and outputs the pair $\langle\cls(x),\con(x)\rangle$, where $\cls(x)$ is the model's classification of $x$ and $\con(x)$ reflects the probability that the classification is correct.  We estimate the environment around $x$ where inputs are closer to inputs of certain classifications over the others. Our work assumes that the inputs are normalized, and thus these distances carry the same significance between the different inputs. 

In~\cref{sec:sep}, we define geometric separation and provide an algorithm to calculate it. Our evaluation shows that geometric separation produces a valuable signal that improves confidence estimations. However, calculating geometric separation is too cumbersome for real-time systems, so we suggest a lightweight approximation in~\Cref{sec:stab}. 
Finally,~\cref{sec:conf} explains how we use the geometric signal to derive $\con(x)$. That is, mapping a real number corresponding to the geometric separation to a number in $[0,1]$ corresponding to the confidence ratio. 


\subsection{Separation Measure}
\label{sec:sep}
We look at the displacement of $x$ compared to nearby data inputs within the training set. Intuitively, when $x$ is close to other inputs in $\cls(x)$ (i.e., inputs with the same classification as $x$) and is far from inputs with other classifications, then the model is correct with a high probability, implying that $\con(x)$ should be high. On the other hand, when there are training inputs with a different classification close to $x$,  we estimate that  $\cls(x)$ is more likely to be incorrect. 

Below we provide definitions that allow us to formalize this intuitive account. In what follows, we consider a model $\model$ to consist of a machine learning model (e.g., a gradient boosted tree or a neural network), along with a labeled train set, $\tr$, used to generate the model. 
We use an implicit notion of distance and denote by $\dis(x,y)$ the distance between inputs $x$ and $y$, and by $\Dis(x,A)$ the distance between the input $x$ and the set $A$ (i.e., the minimal distance between $x$ and the inputs in $A$).

\begin{defn}[Safe and Dangerous inputs]
\label{def:Tr_C(x)}
Let $\model$ be a model.  
For an input $x$ in the sample space we define:
\[\friend(x) :=\{x'\in \tr : \cls(x')=\cls(x)\}.\]
We denote by $\nfriend(x)$ the set $\tr \setminus \friend(x)$.
An input $x\in \mathcal{X}$ is labeled as \emph{safe} if $D(x,\friend(x)) < D(x,\nfriend(x))$, and it is labeled as \emph{dangerous} otherwise.
\end{defn}

\begin{defn}[Zones]
\label{def:zone}
Let $x$ be a safe (dangerous) input. 
A \emph{zone} for $x$, denoted $z_x$, is such that for any input $y$, if $d(x,y)<z_x$, then $\Dis(y,\friend(x))<\Dis(y,\nfriend(x))$ ($\Dis(y,\friend(x))\geq\Dis(y,\nfriend(x))$). 
For each $x$ we denote the maximal such zone by $\zone(x)$.
\end{defn}

In other words, a zone of a safe (dangerous) input $x$ is a  radius around $x$ such that all inputs in this ball are  closer to an input in $\friend(x)$ ($\nfriend(x)$) than to any input in $\nfriend(x)$ ($\friend(x)$). $\zone(x)$ is the \emph{maximal} zone attainable of $x$. 

\begin{figure}[t!]
    \centering
    \includegraphics[width=0.5\textwidth]{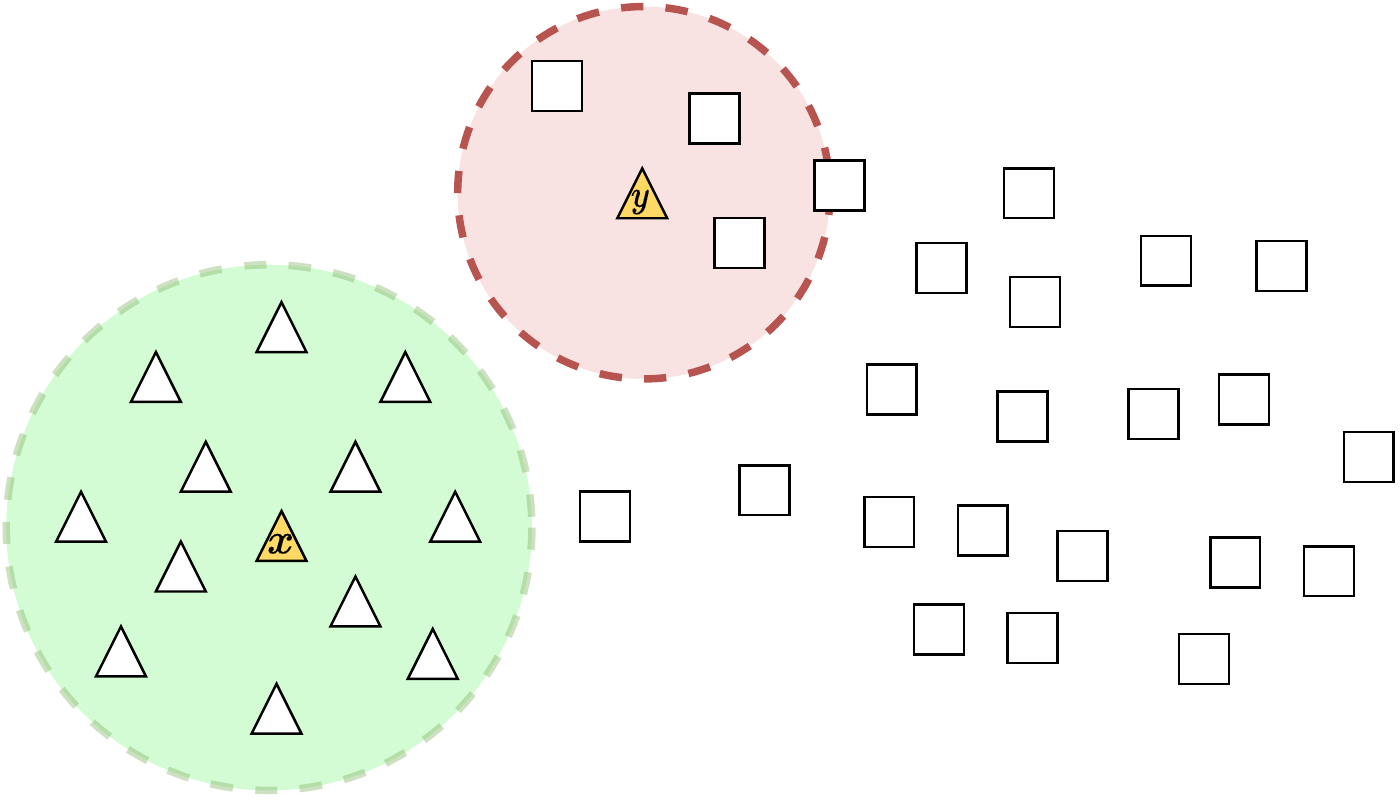}
    \caption{Geometric representation of safe and dangerous inputs, maximal zones, and separation values.
    The various classifications are illustrated via different shapes, and the safe (danger) zones of x (y) are illustrated via green (red) circles.}
    \label{fig:Sep-circles}
\end{figure}
\Cref{fig:Sep-circles} provides a geometric illustration of the safe and danger zones of a given input and of the separation values. For illustration purposes, the figure uses the $L_2$ norm with two dimensions, whereas our data usually includes many more dimensions. For example, a $30\times30$ traffic sign image will have 900 dimensions.
In the figure, the shapes represent the classification of training set inputs. In yellow, we see a new input ($x$ on the left-hand-side and $y$ on the right-hand-side) which the model classifies as a triangle. 
$x$ is a safe input because it is closer to other triangles in the training set than it is to the squares.
The green highlighted ball represents its maximal zone.  
The input $y$ is dangerous because the closest training set input is a square. 
The red highlighted ball represents its maximal zone which dually represents how far we need to distance ourselves from $y$ so that inputs classified as triangles may become closer than other inputs.  

\begin{defn}[Separation]
\label{def:sep}
The separation of a data input $x$ with respect to the model $\model$ is  $\zone(x)$ when  $x$ is a safe input, and $-1\cdot \zone(x)$
when $x$ is a dangerous input. 
\end{defn}

That is, the separation of $x$ encapsulates the maximal zone for $x$ (provided by the absolute value) together with an indication of whether the input is safe or dangerous (provided by the sign).
The separation of $x$ depends only on the classification of $x$ by the model and the train set. This is because our definition partitions the inputs in $\tr$ into two sets: one with $\cls(x)$, $\friend(x)$, and one with all other classifications, $\nfriend(x)$. These sets vary between models only when they disagree on the classification of $x$.
Note that $x$'s for which the distance from $\friend(x)$ equals the distance from $\nfriend(x)$ are considered dangerous inputs, and their separation measure will be zero.

As mentioned,~\Cref{def:zone} and~\Cref{def:sep} use an implicit notion of distance and can accept any distance metric (e.g., $L_1, L_2$ or $L_{\infty}$). However, throughout this work, we use $L_2$ as it is a standard measure for safety features in adversarial machine learning~\cite{Robustness-Moosavi}, in addition to it being easy to illustrate and intuitive to understand.
Moreso, as our work targets real-time confidence estimations using $L_2$ allows us to leverage standard and well-optimized libraries. Accordingly, all our definitions and calculations assume the $L_2$ metrics (Euclidean distances). 
Nevertheless, ~\Cref{sec:metrics} shows that other metrics are also feasible. 



Next, we provide a formula for calculating the separation of a given input $x$ within the $L_2$ distance metric.

\begin{defn}
\label{def:practicalsep}
Given a model $\model$ and an input $x$, define:
\[\sep(x)= \min_{x'' \in \nfriend(x)} \max_{x' \in \friend(x)} \frac{\dis^2(x,x'') - \dis^2(x,x') }{2\dis(x',x'')}\]
\end{defn}

\begin{lema}\label{lem:sep}
Let $x, x', x'' \in \mathbb{R}^n$ be inputs such that $\dis(x,x')<\dis(x,x'')$. 
The maximal distance $M(x,x',x'')$ for which if $y\in \mathbb{R}^n$ such that $\dis(x,y)<M(x,x',x'')$, then 
$\dis(y,x')< \dis(y,x'')$ is \[\frac{\dis^2(x,x'')-\dis^2(x,x')}{2\dis(x',x'')}.\]
\end{lema}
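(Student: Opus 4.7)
The plan is to reformulate the condition $d(y,x') < d(y,x'')$ geometrically: the set of points $y$ that are strictly closer to $x'$ than to $x''$ is the open half-space bounded by the perpendicular bisector hyperplane
\[H = \{y \in \mathbb{R}^n : d(y,x') = d(y,x'')\}.\]
Since $d(x,x') < d(x,x'')$, the input $x$ lies strictly inside this open half-space. I would then argue that the largest open ball centered at $x$ contained in the half-space has radius exactly $r := \Dis(x,H)$: any larger radius would cross the bisector and produce points equidistant from, or closer to, $x''$. This identifies $M(x,x',x'')$ with $\Dis(x,H)$, reducing the lemma to computing a point-to-hyperplane distance.

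Next I would give $H$ an affine-linear description. Expanding $\|y-x'\|^2 = \|y-x''\|^2$ cancels the $\|y\|^2$ term and yields
\[2\langle y,\, x''-x'\rangle = \|x''\|^2 - \|x'\|^2,\]
so $H$ is a genuine hyperplane with normal vector $x''-x'$ (nonzero because otherwise $d(x,x')=d(x,x'')$). The standard distance formula then gives
\[\Dis(x,H) \;=\; \frac{\bigl|\,2\langle x,\,x''-x'\rangle - (\|x''\|^2 - \|x'\|^2)\bigr|}{2\,\|x''-x'\|}.\]

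The main (and only slightly delicate) step is simplifying the numerator. I would add and subtract $\|x\|^2$ to complete the two squares, rewriting the numerator as
\[\bigl|\,(\|x-x'\|^2 + \|x\|^2 - \|x\|^2) - (\|x-x''\|^2 + \|x\|^2 - \|x\|^2)\,\bigr| \;=\; \bigl|\,d^2(x,x') - d^2(x,x'')\,\bigr|,\]
and since $d(x,x') < d(x,x'')$ the absolute value is simply $d^2(x,x'') - d^2(x,x')$. Substituting $\|x''-x'\| = d(x',x'')$ in the denominator yields the claimed value.

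The only subtlety I anticipate is the boundary behavior: one must verify that the strict inequality $d(x,y) < M$ in the hypothesis matches the \emph{open} half-space characterization, so that $M$ can indeed attain $\Dis(x,H)$ (the bisector itself is excluded by strictness, while any $M' > \Dis(x,H)$ fails because the ball of radius $M'$ around $x$ contains a point of $H$ in its interior, violating the condition). This is a short observation but deserves explicit mention to justify calling $\Dis(x,H)$ the \emph{maximal} such distance rather than merely a supremum.
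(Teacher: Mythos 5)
Your proposal is correct and follows essentially the same route as the paper: both identify $M(x,x',x'')$ with the distance from $x$ to the perpendicular bisector of the segment $x'x''$. The only difference is that you carry out the distance computation explicitly via the point-to-hyperplane formula in $\mathbb{R}^n$ (and correctly note the open/strict-inequality boundary issue), where the paper restricts to the plane through the three points and leaves the final step as unstated ``trigonometric calculations.''
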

\begin{proof}
Since any three points in space define a plane we focus on the plane defined by these three points. 
\begin{figure}[h!]
    \centering
    \includegraphics[width=0.35\textwidth]{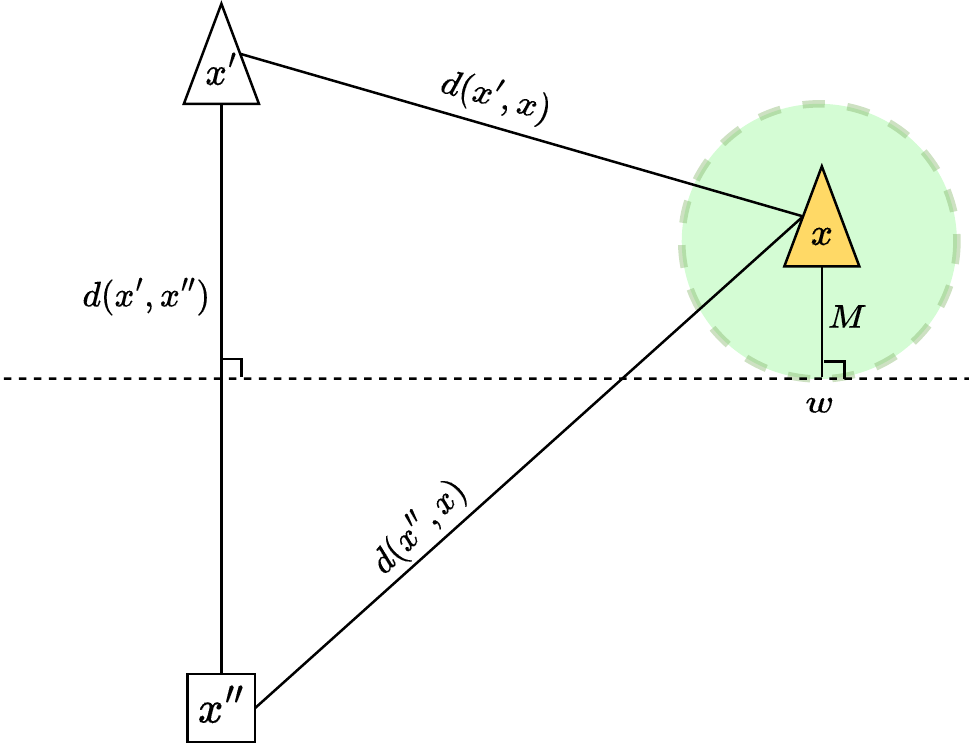}
    \caption{Illustration of the proof of~\Cref{lem:sep}}
    \label{fig:Sep-proof}
\end{figure}

\Cref{fig:Sep-proof} demonstrates a geometric positioning of the points and the main constructions in the proof.
The perpendicular bisector to the line between $x'$ and $x''$ divides the plane into two parts: one in which all the points are closer to $x''$ than to $x'$ (the lower part in the figure) and one in which all the points are closer to $x'$ than to $x''$ (the upper part in the figure).
Our goal is thus to establish the distance between $x$ and the lower part of the plane. Hence, $M(x,x',x'')$ amounts to the distance from $x$ to the perpendicular bisector to the line between $x'$ and $x''$.
Using trigonometric calculations, it is straightforward to verify that indeed 
$$M(x,x',x'')=\frac{\dis^2(x,x'')-\dis^2(x,x')}{2\dis(x',x'')}.$$\qedhere
\end{proof}

\begin{prop} \label{prop:sep}
$\sep(x)$ is the separation of $x$ with respect to the model $\model$ (in~\Cref{def:sep}).  
\end{prop}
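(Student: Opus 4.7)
The plan is to verify both cases of Definition~\ref{def:sep} separately: show $\sep(x) = \zone(x)$ when $x$ is safe and $\sep(x) = -\zone(x)$ when $x$ is dangerous, in each case by direct appeal to Lemma~\ref{lem:sep}. I would describe the safe case in detail; the dangerous case is symmetric.

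First I would establish that $\sep(x)$ is itself a zone in the safe case, i.e., that any $y$ with $\dis(x,y) < \sep(x)$ satisfies $\Dis(y,\friend(x)) < \Dis(y,\nfriend(x))$. Fix such a $y$ and any $x'' \in \nfriend(x)$. By the outer minimum in the formula for $\sep(x)$, we have $\max_{x' \in \friend(x)} M(x,x',x'') \geq \sep(x) > \dis(x,y)$, so there is some $x' \in \friend(x)$ with $M(x,x',x'') > \dis(x,y)$. In particular $M$ is positive at this witness, so $\dis(x,x') < \dis(x,x'')$, matching the hypothesis of Lemma~\ref{lem:sep}, and the lemma delivers $\dis(y, x') < \dis(y, x'')$. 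Hence $\Dis(y,\friend(x)) \leq \dis(y,x') < \dis(y,x'')$ for every $x'' \in \nfriend(x)$, yielding $\Dis(y,\friend(x)) < \Dis(y,\nfriend(x))$.

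For the maximality half I would let $x_0'' \in \nfriend(x)$ achieve the outer minimum and $x_0' = \arg\max_{x' \in \friend(x)} M(x,x',x_0'')$, and for any $z > \sep(x)$ construct a $y$ inside the open ball of radius $z$ around $x$ that violates the zone condition. The candidate $y$ is obtained by stepping from $x$ perpendicularly onto the bisector of $x_0'$ and $x_0''$ and continuing slightly past it, so that by Lemma~\ref{lem:sep} the inequality $\dis(y, x_0') < \dis(y, x_0'')$ is reversed; one then checks, using the fact that $x_0''$ attains the outer minimum, that no other $x' \in \friend(x)$ is strictly closer to $y$ than $x_0''$, giving $\Dis(y,\friend(x)) \geq \Dis(y,\nfriend(x))$. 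The dangerous case is entirely symmetric: $\sep(x)$ becomes non-positive, and its absolute value is shown via the same Lemma~\ref{lem:sep}-based argument (with the roles of $\friend(x)$ and $\nfriend(x)$ swapped and the sign of $M$ flipped) to equal the maximal zone for the flipped inequality $\Dis(y,\friend(x)) \geq \Dis(y,\nfriend(x))$, so $\sep(x) = -\zone(x)$ as required.

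The main obstacle I anticipate is precisely the maximality half: demonstrating that the candidate $y$ near the $x_0'$--$x_0''$ bisector is not ``rescued'' by some third friend $x'$ requires leveraging the outer-min choice of $x_0''$ together with a careful convex-geometry argument about the intersection of half-spaces $\{y : \dis(y, x_0'') \leq \dis(y, x')\}$ over $x' \in \friend(x)$. The forward inclusion via Lemma~\ref{lem:sep} is essentially algebraic and pairwise, but the reverse inclusion relies on global Voronoi-cell geometry rather than any single pairwise comparison.
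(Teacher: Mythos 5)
Your first half---showing that any $y$ with $\dis(x,y)<\sep(x)$ is closer to $\friend(x)$ than to $\nfriend(x)$---is correct and is essentially the paper's own argument; your justification of the hypothesis of \Cref{lem:sep} via positivity of $M$ at the witness is in fact a bit more careful than the paper's appeal to ``$x$ is a safe input.'' The divergence is in the maximality half. The paper disposes of it in one sentence: the intersection point $w$ of \Cref{fig:Sep-proof}, at distance $\sep(x)$ from $x$, ``can be easily shown to be of equal distances from $\friend(x)$ and $\nfriend(x)$.'' You instead stop and name the real difficulty: to exhibit a violating $y$ just past the $x_0'$--$x_0''$ bisector one must rule out a third friend rescuing $y$, and that is a statement about the intersection of half-spaces over all of $\friend(x)$, not a pairwise fact.

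That obstacle is not merely technical---it is fatal to the statement as written, so your plan (and the paper's) cannot be completed. Take $x$ at the origin in the plane, $\friend(x)=\{(1,0),(-1,0)\}$ and $\nfriend(x)=\{(0,10)\}$. Both friends give $M=\frac{99}{2\sqrt{101}}\approx 4.925$, so $\sep(x)\approx 4.925$. But the set of points $y$ with $\Dis(y,\friend(x))\geq\Dis(y,\nfriend(x))$ is the wedge $\{(u,v): v\geq (2|u|+99)/20\}$, whose distance from the origin is $99/20=4.95$, attained only at its apex $(0,4.95)$; hence $\zone(x)=4.95>\sep(x)$. The reason is exactly the one you identified: for a fixed non-friend $x''$, the quantity $\max_{x'}M(x,x',x'')$ demands a \emph{single} friend that beats $x''$ on the entire ball, whereas the zone condition allows the witnessing friend to depend on $y$, and the distance to an intersection of half-spaces can strictly exceed the maximum of the distances to the individual half-spaces. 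The same example refutes the paper's claim about $w$: the foot of the perpendicular from $x$ to the bisector of $(1,0)$ and $(0,10)$ is strictly closer to $(-1,0)$ than to $(0,10)$, so it is not equidistant from the two sets and does not violate the zone condition. What the pairwise arguments actually establish is only $\sep(x)\leq\zone(x)$ for safe inputs (and the corresponding inequality for dangerous ones), i.e., that $\sep(x)$ is \emph{a} zone, not the maximal one; \Cref{prop:sep} as stated is false, although this weaker fact is all that the rest of the paper uses.
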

\begin{proof}
Let $x$ be a safe input, and $y$ be an input such that: \[\dis(x,y)< \min_{x'' \in \nfriend(x)} \max_{x' \in \friend(x)} \frac{\dis^2(x,x'') - \dis^2(x,x') }{2\dis(x',x'')}.\]
We first show that $y$ is closer to $\friend(x)$ than to $\nfriend(x)$.
Let $z''\in \nfriend(x)$,
it suffices to show that there exist $z' \in \friend(x)$ such that $\dis(y,z')<\dis(y,z'')$. 
%
Notice that:
\[\dis(x,y)<\max_{x' \in \friend(x)} \frac{\dis^2(x,z'') - \dis^2(x,x') }{2\dis(x',z'')}. \]
Therefore, there exist a $z'\in \friend(x)$ for which:
\[\dis(x,y)< \frac{\dis^2(x,z'') - \dis^2(x,z') }{2\dis(z',z'')}\]
Thus, since $x$ is a safe input,  using~\Cref{lem:sep}, we conclude that $\dis(y,z')< \dis(y,z'')$.
%
The proof follows similar arguments for dangerous inputs,  taking the distances as $-\sep$ and flipping the inequalities.

To show the maximality, observe that the intersection point marked by $w$ in~\Cref{fig:Sep-proof}, which is at distance $\sep(x)$ from $x$,  can be easily shown to be of equal distances from $\friend(x)$ and $\nfriend(x)$.
%
%
\end{proof}


While separation provides the maximal zone, it is expensive to calculate. As can be seen  in~\Cref{def:practicalsep}, to estimate the separation of one specific input, we go over many triplets of inputs. The exact amount is unbounded and depends on the dataset.  Thus, separation is infeasible to compute in near real-time. Therefore, when time or computation resources are limited, we require a different and computationally simpler notion. 
Accordingly, the following section provides an efficient approximation of the separation measure.

\subsection{Fast-Separation Approximation}
\label{sec:stab}
We approximate the separation of a given input using only its distance from $\friend(x)$ and its distance from $\nfriend(x)$. 
This simplification allows us to calculate a zone for any given input, which is not necessarily the maximal one. 
The reliance on these two distances enables a faster calculation since we do not perform an exhaustive search over many triplets of inputs.
In particular, we do not consider the geometric positioning of the inputs that determine the distance from these sets. 
%

\begin{defn}[Fast-Separation] 
\label{def:stab}

Given a model $\model$, the fast-separation of an input $x$, denoted $\stab(x)$, is defined as:
\[\stab(x)=\frac{\Dis(x,\nfriend(x))- \Dis(x,\friend(x))}{2}\]
\end{defn}

Notice that just as is the case for separation, if $x$ is a safe input, its fast-separation value will be strictly positive and non-positive otherwise.

\Cref{fig:Stab_theory} illustrates the notion of fast-separation. In particular, it exemplifies why it only provides an approximation of the more accurate separation measure. It encapsulates a zone that is less than or equal to that of separation. 
Sub-figure  (a) demonstrates a case in which $\stab(x)=\sep(x)$, while sub-figure (b) presents a case where $\sep(x)$ is considerably larger than $\stab(x)$.
    

\begin{figure}[t!]
    \centering
    \subfloat[$\stab(x) = \sep(x) = 0.5$ ]{{\includegraphics[width=0.4\linewidth]{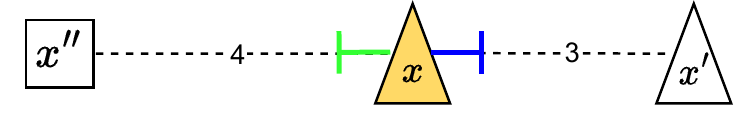} }}
    \centering 
    \subfloat[$0.5=\stab(x)   \neq \sep(x) = 3.5$ ]{{\includegraphics[width=0.45\linewidth]{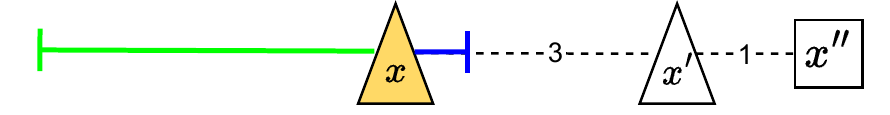} }}

    \caption{Geometric representation of the  induced zones of $\stab$ and $\sep$ for different input alignments.
    $\stab$ is represented by blue arrows and $\sep$ by green arrows.}
    \label{fig:Stab_theory}
\end{figure}

The separation measure defined as the maximal safe zone is applicable to all norms. However, the explicit formula $\sep$, given in~\Cref{def:practicalsep} is only applicable in $L_2$. The following proposition demonstrates that fast separation, $\stab$, calculates a zone that is always contained in the maximal zone for any distance metric. Thus, it approximates the geometric separation for all metrics as the proof only requires the triangle inequality.

\begin{prop}\label{prop:stab_all_l}
For any metric $\ell$, and for any input $x$, $\stab(x)$ (calculated with respect to $\ell$) is a zone of $x$. That is, $|\stab(x)|\leq \zone(x)$. Furthermore, $\stab(x)$ has the same sign as the separation of $x$.

\end{prop}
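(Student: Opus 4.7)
The plan is to reduce everything to two quantities, $a := \Dis(x,\friend(x))$ and $b := \Dis(x,\nfriend(x))$, in terms of which $\stab(x) = (b-a)/2$. The sign claim is then immediate from~\Cref{def:Tr_C(x)}: $x$ is safe iff $a < b$ iff $\stab(x) > 0$, and $x$ is dangerous iff $a \ge b$ iff $\stab(x) \le 0$, which matches the sign convention for separation in~\Cref{def:sep}.

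For the zone claim, I would handle the safe case first. Assume $a < b$, so $|\stab(x)| = (b-a)/2$, and let $y$ satisfy $\ell(x,y) < (b-a)/2$. The goal is to show $\Dis(y,\friend(x)) < \Dis(y,\nfriend(x))$. The key observation is that the value $(a+b)/2$ cleanly separates the two distances. On the $\friend$ side, pick any $x^*\in\friend(x)$ that (nearly) attains the infimum $a$; then by the triangle inequality
\[\Dis(y,\friend(x))\ \le\ \ell(y,x^*)\ \le\ \ell(y,x)+\ell(x,x^*)\ <\ \tfrac{b-a}{2}+a\ =\ \tfrac{a+b}{2}.\]
On the $\nfriend$ side, for \emph{every} $x''\in\nfriend(x)$ we have $\ell(x,x'')\ge b$, so the reverse triangle inequality gives
\[\ell(y,x'')\ \ge\ \ell(x,x'')-\ell(x,y)\ >\ b-\tfrac{b-a}{2}\ =\ \tfrac{a+b}{2},\]
hence $\Dis(y,\nfriend(x))\ge(a+b)/2 > \Dis(y,\friend(x))$, as required. (If the infimum defining $a$ is not attained, one simply replaces $a$ by $a+\varepsilon$ for arbitrarily small $\varepsilon>0$ and takes $\varepsilon\to 0$.)

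The dangerous case is entirely symmetric: assume $a \ge b$, so $|\stab(x)| = (a-b)/2$, and for $y$ with $\ell(x,y)<(a-b)/2$ the same two triangle-inequality bounds, with the roles of $\friend(x)$ and $\nfriend(x)$ swapped, yield $\Dis(y,\nfriend(x)) < (a+b)/2 \le \Dis(y,\friend(x))$, which is exactly the condition in~\Cref{def:zone} for a zone of a dangerous input. Combining both cases shows that $|\stab(x)|$ is a valid zone, and therefore $|\stab(x)| \le \zone(x)$ by maximality of $\zone(x)$.

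There is no real obstacle here: the proof only uses the triangle inequality (and its reverse form), which is available for any metric $\ell$, so the argument transfers verbatim beyond $L_2$. The one place to be careful is the possibility that the infimum defining $\Dis(x,\friend(x))$ or $\Dis(x,\nfriend(x))$ is not attained by an actual training point, which is handled by an $\varepsilon$-approximant as noted above; in the typical setting where $\tr$ is finite this issue does not even arise.
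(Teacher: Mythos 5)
Your proof is correct and follows essentially the same route as the paper's: both arguments bound $\Dis(y,\friend(x))$ from above and $\Dis(y,\nfriend(x))$ from below using the triangle inequality, with the midpoint $(a+b)/2$ serving as the separating value (the paper writes this as one chained inequality through nearest points $z',z'',w$ rather than two displayed bounds). Your explicit handling of a possibly unattained infimum is a minor extra care the paper skips since $\tr$ is finite.
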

\begin{proof}
Let $x$ be a safe input, we show that $\stab(x)$ is a zone of $x$.


Let $y$ be a point such that \[\dis(x,y)< \stab(x)= \frac{\Dis(x,\nfriend(x))- \Dis(x,\friend(x))}{2} .\]
We show that $\Dis(y,\friend(x))<\Dis(y,\nfriend(x))$.
Take $z' \in \friend(x)$ and  $z'',w \in \nfriend(x)$  such that $\dis(x,z')=\Dis(x,\friend(x))$,  $\dis(x,z'')=\Dis(x,\nfriend(x))$, 
and $\dis(y,w)=\Dis(y,\nfriend(x))$.
Using the triangle inequality we get:
\begin{gather*}
     \Dis(y,\friend(x))\leq \dis(y,z')\leq \dis(x,z')+ \dis(x,y)\\
    < \dis(x,z')+ \frac{\dis(x,z'')-\dis(x,z')}{2}
    =\frac{\dis(x,z'')+\dis(x,z')}{2}\\
    = \dis(x,z'') - \frac{\dis(x,z'')-\dis(x,z')}{2}
    < \dis(x,z'')-\dis(x,y)\\
    \leq \dis(x,w)-\dis(x,y) \leq \dis(y,w)
    = \Dis(y,\nfriend(x))
\end{gather*}
For dangerous inputs, the proof follows similar arguments, switching $\friend(x)$ and $\nfriend(x)$.

For the sign of $\stab(x)$ it is easy to see that for a safe (dangerous) input, $\stab(x)$ will be positive (negative) and therefore has the same sign as the separation.
\end{proof}

\cref{prop:stab_all_l}  shows that $\stab(x)$ induces a zone that is always smaller than the maximal zone for any distance metric. In the case of $L_2$, we have a formula for calculating the maximal zone ($\sep(x)$), and the following proposition provides an approximation bound.



\begin{prop}\label{prop:bound}

The following holds for any point $x$:
\[|\sep(x)-\stab(x)|\leq \frac{\Dis(x,\friend(x))+ \Dis(x,\nfriend(x))}{2}.\]
\end{prop}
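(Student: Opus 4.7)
The plan is to sandwich $\sep(x)$ between $\stab(x)$ on one side and a simple geometric quantity on the other, and then close the gap by algebra. The two cases (safe and dangerous) are dual to one another, so I only need to argue one carefully.

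The first ingredient is already in hand: \Cref{prop:stab_all_l} tells us that $|\stab(x)| \leq |\sep(x)|$ and that the two quantities share a sign. Hence for safe $x$ we have $0 \leq \stab(x) \leq \sep(x)$, and for dangerous $x$ we have $\sep(x) \leq \stab(x) \leq 0$. So $|\sep(x) - \stab(x)|$ is in each case a one-sided gap, and to bound it I only need to control $\sep(x)$ from the outer side (from above in the safe case, from below in the dangerous case).

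The second, and conceptually central, ingredient is a direct geometric bound on $\zone(x)$. For safe $x$ I would show $\zone(x) \leq \Dis(x,\nfriend(x))$ by contradiction: if $\zone(x)$ exceeded $\Dis(x,\nfriend(x))$, then a nearest non-friend $x'' \in \nfriend(x)$ would itself satisfy $\dis(x,x'') < \zone(x)$, and plugging $y := x''$ into \Cref{def:zone} would force $\Dis(x'',\friend(x)) < \Dis(x'',\nfriend(x)) = 0$, which is impossible. The dangerous case is symmetric: a nearest friend witnesses $\zone(x) \leq \Dis(x,\friend(x))$. Translating through \Cref{def:sep} this gives $\sep(x) \leq \Dis(x,\nfriend(x))$ in the safe case and $\sep(x) \geq -\Dis(x,\friend(x))$ in the dangerous case.

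Finally, I would close everything with the formula $\stab(x) = \tfrac{\Dis(x,\nfriend(x)) - \Dis(x,\friend(x))}{2}$. In the safe case, $\sep(x) - \stab(x) \leq \Dis(x,\nfriend(x)) - \stab(x)$, and plugging in $\stab$ collapses the right-hand side to exactly $\tfrac{\Dis(x,\friend(x)) + \Dis(x,\nfriend(x))}{2}$. The dangerous case mirrors this via $\stab(x) - \sep(x) \leq \stab(x) + \Dis(x,\friend(x))$, which simplifies to the same quantity. The only real obstacle is being careful with the sign conventions on $\sep$ and $\stab$; once the safe/dangerous bookkeeping is fixed, the inequality drops out in a couple of lines.
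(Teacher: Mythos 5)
Your proof is correct, but it takes a genuinely different route from the paper's. The paper works directly with the explicit $L_2$ formula of \Cref{def:practicalsep}: it specializes the outer minimum to a nearest non-friend $z''$, factors $\dis^2(x,z'')-\dis^2(x,z')=(\dis(x,z'')-\dis(x,z'))(\dis(x,z'')+\dis(x,z'))$ and applies the triangle inequality $\dis(x,z'')-\dis(x,z')\leq\dis(z',z'')$ to kill the denominator, and finally observes that the maximizing friend $z'$ satisfies $\dis(x,z')\leq\Dis(x,\nfriend(x))$. You instead bound the abstract quantity $\zone(x)$ itself, noting that a nearest non-friend cannot lie strictly inside a safe zone (else it would have to be closer to $\friend(x)$ than to itself), which gives $\sep(x)\leq\Dis(x,\nfriend(x))$ for safe $x$, after which the bound follows by substituting the definition of $\stab(x)$. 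Your argument buys metric-independence and avoids the algebra of the explicit formula entirely, and your intermediate bound is attained in the paper's tightness example, so nothing is lost; the price is that you must invoke \Cref{prop:sep} to identify the formula $\sep(x)$ with the maximal zone, whereas the paper's computation is self-contained at the level of the formula. One point worth making explicit in your write-up: the one-sidedness of the gap ($\stab(x)\leq\sep(x)$ for safe $x$, $\sep(x)\leq\stab(x)$ for dangerous $x$) combines \Cref{prop:stab_all_l} with \Cref{prop:sep}, and your contradiction argument implicitly assumes $\nfriend(x)$ (respectively $\friend(x)$) is a nonempty finite set so that a nearest element exists; both assumptions are harmless here but should be stated.
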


\begin{proof}
We here prove the bound for safe inputs $x$, the proof for dangerous inputs is similar. 
Let $x$ be a safe input. By definition:
\begin{align*}
&|\sep(x)-\stab(x)|= \sep(x)-\stab(x) =  \\
=&\min_{x'' \in \nfriend(x)} \max_{x' \in \friend(x)} \frac{\dis^2(x,x'') - \dis^2(x,x') }{2\dis(x',x'')}\\
&\quad -\frac{\Dis(x,\nfriend(x))- \Dis(x,\friend(x))}{2} 
\end{align*}
Let $z'' \in \nfriend(x)$  be an input such that $\dis(x,z'')=\Dis(x,\nfriend(x))$, and let $z'\in \friend(x)$ be a input for which the  maximum on the expression above is obtained. Then, we have: 
\begin{small}
\begin{align}
&|\sep(x)-\stab(x)| \notag \\
\leq&\max_{x' \in \friend(x)} \frac{\dis^2(x,z'') - \dis^2(x,x') }{2\dis(x',z'')} - \frac{\dis(x,z'')- \Dis(x,\friend(x))}{2}\label{eq1: 1}\\
=&\frac{\dis^2(x,z'') - \dis^2(x,z') }{2\dis(z',z'')} - \frac{\dis(x,z'')- \Dis(x,\friend(x))}{2}\label{eq1: 2}\\
\leq& \frac{\dis(x,z'') + \dis(x,z') }{2} - \frac{\dis(x,z'')- \Dis(x,\friend(x))}{2}\label{eq1: 3}\\
=&\frac{\dis(x,z') + \Dis(x,\friend(x))}{2} \\
\leq &\frac{\Dis(x,\friend(x))+ \Dis(x,\nfriend(x))}{2}
\label{eq1: 5}
\end{align}
\end{small}
The first inequality (\Cref{eq1: 1}) holds due to the definition of the minimum function.
The second inequality (\Cref{eq1: 3}) is due to the triangle inequality.
The last inequality (\Cref{eq1: 5}) holds because, since $x$ is a safe input, the maximal distance between $x$ and $z'$ can't be greater than the distance from $x$ to $\nfriend(x)$.
\end{proof}

Notice that the above bound is tight, in the sense that there exists an example witnessing the exact bound, as shown in~\Cref{fig:bound} below.
\begin{figure}[h!]
    \centering
    \includegraphics[width=0.50\textwidth]{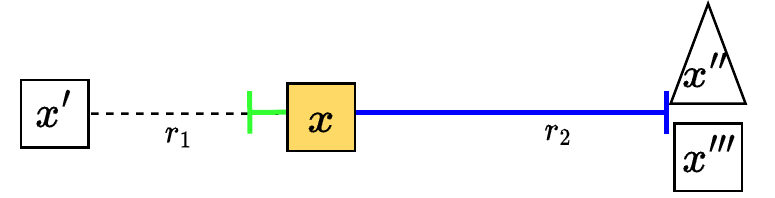}
    \caption{Example of a input $x$ with $|\sep(x)-\stab(x)|=\frac{\Dis(x,\nfriend(x)) + \Dis(x,\friend(x))}{2} $}
    \label{fig:bound}
\end{figure}

\subsection{Calibration of the Geometric Separation}
\label{sec:alg}
In this section, we use the geometric notions of $\stab(x)$ and $\sep(x)$ to derive confidence estimations ($\con(x)$). Notice that $\con(x) \in (0,1)$ while the geometric notions are in $(-\infty, +\infty)$. Next, we explain how to translate between the two. 

%
%
\label{sec:conf}
%
For each value of $\stab(x)$ ($\sep(x)$), we need to match a confidence value.
To do so,  we split the data into a Validation set, $\vs$, which is disjoint from the train and test sets. Such a methodology is commonly used in post-hoc calibration methods~\cite{CNNCalibration,platt1999probabilistic,kull2017beta,mozafari2018attended,tomani2022parameterized,pmlr-v119-zhang20k,gupta2021distribution,Ana2019Verified}.
We then measure the accuracy for inputs with similar $\stab(x)$ (or $\sep(x)$) on $\vs$. 
%
%
At this point, we have pairs $(y,z)$ where $y$ is a geometric separation value, and $z$ is the desired confidence value (as measured by the accuracy on $\vs$). The next step is to find a low-dimensionality function that maximizes accuracy.

Hence, we perform a fitting between $\stab$ (or $\sep$) values and the ratios of correct classifications (on $\vs$) for each unique value. E.g., if for $\stab$ value of $10$ we see that 90\% of the points are classified correctly, then we'll add the pair $\langle 10, 0.9 \rangle$ to the fitting function. Intuitively, we expect very low confidence values for highly negative distances and approach 100\% confidence when the distances are large and positive. 

\section{Experimental Results}
\label{sec:results}
In this section, we evaluate the effectiveness of our geometric approach. First, we explain the evaluation methodology in~\Cref{sec:method}, including the datasets and models. Then we continue our experiment results step by step by gradually explaining the tradeoffs and design decisions we take throughout this work.

\subsection{Methodology}\label{sec:method}
\subsubsection{Datasets}\label{sec:datasets}
Our evaluation uses the following standard datasets:
\begin{itemize}
   \item \emph{Modified National Institute of Standards and Technology database (MNIST)}~\cite{MNIST}.
   A dataset that consists of  hand-written images designed for training various image processing systems. It includes 70,000 28×28 grayscale images belonging to one of ten labels.
   
   \item \emph{Fashion MNIST (Fashion)}~\cite{fashion_MNIST}.
   A dataset comprising of 28×28 grayscale images of 70,000 fashion products from 10 categories. 

   \item \emph{German Traffic Signs Recognition Benchmark (GTSRB)}~\cite{GTSRB}.
   A large image set of traffic signs for
    the single-image, multi-class classification problem. It
    consists of 50,000 RGB images of traffic signs, belonging to 43 classes.

    \item \emph{American Sign Language (SignLang)}~\cite{SignLanguage}.
    A database of hand gestures representing a multi-class problem with 24 classes of letters. It consists of 30,000 28×28 grayscale images.
    
    \item \emph{Canadian Institute for Advanced Research (CIFAR10)}~\cite{CIFAR10}. A dataset containing 32x32 RGB images of 60,000 objects from 10 classes.

\end{itemize}
For each dataset, we randomly partitioned the data into  three subsets: train set $\tr$ (60\%), validation set $\vs$ (20\%) and test set $\ts$ (20\%). 
As is standard practice, we used normalized datasets (e.g.,  the same image size for all images), see ~\cite{Code} for details.
The train set is used to calculate (fast-)separation and train the model. 
The validation set is used to evaluate the confidence estimation associated with each (fast-)separation value. These values, in turn, are used to fit an isotonic function.
Finally, the test set is used to evaluate the confidence on new inputs that were \emph{not} present in the train and validation sets. 


\subsubsection{Models}\label{sec:models}
In our evaluation, we use the following popular machine learning models: Random Forest (RF) \cite{RFTheory}, Gradient Boosting Decision Trees (GB)~\cite{GBTheory}, and Convolutional Neural Network (CNN)~\cite{CNN}.
We chose these models because they are different: RF and GB are tree-based, while CNN is a neural network. 
For RF and GB, we configured the hyperparameters (e.g., the maximal depth of trees) by cross-validation on the train set via the random search technique~\cite{bergstra2012random}.
For CNN, we used the configuration suggested by practitioners. 
%
Our specific configurations as well as the accuracy scores of each of the models are detailed in~\cite{Code}. 



\subsubsection{Evaluation Algorithms}
To evaluate our method, we compare our (fast-)separation-based confidence estimation to the following methods:
the built-in isotonic regression calibration implemented by Sklearn library, ${Iso}$ \cite{Zadorozny2002Transforming}; 
the built-in Platt scaling calibration method implemented by Sklearn library, ${Platt}$~\cite{platt1999probabilistic}; 
the scaling-binning calibrator, ${SBC}$~\cite{Ana2019Verified} implemented by the same authors repository;
the histogram-binning, ${HB}$~\cite{gupta2021distribution} implemented by the same authors repository;
the beta calibrator, ${Beta}$~\cite{kull2017beta} implemented by~\cite{Kueppers_2020_CVPR_Workshops};
the bayesian binning into quantiles calibrator, ${BBQ}$~\cite{naeini2015obtaining} implemented by~\cite{Kueppers_2020_CVPR_Workshops};
the temperature scaling calibrator, ${TS}$~\cite{pmlr-v70-guo17a}  implemented by~\cite{kerrigan2021combining}; 
and the ensemble temperature scaling calibrator, ${ETS}$~\cite{pmlr-v119-zhang20k}  implemented by~\cite{kerrigan2021combining}.
%
Notice that $TS$ and $ETS$ are calibration methods for neural networks thus we only apply those to CNNs. 

Each method receives the same baseline model as an input yielding a slightly different calibrated model. 
%
Note that our method is evaluated against the uncalibrated model as our method does not affect the model. Moreover, it allows us to compare our method against different calibration methods, as shown in~\Cref{tab:mainresult}.

To evaluate the confidence predictions, we use the \emph{Expected Calibration Error (ECE)}, which is a standard method to evaluate confidence calibration of a model~\cite{Xing2020distance,Krishnan2020Improving}.
Concretely, the predictions sample of size $n$ are partitioned into $M$ equally spaced bins  $(B_{m})_{m\leq M}$, and ECE measures the difference between the sample accuracy in the $m^{th}$ bin and the  the average
confidence in it~\cite{naeini2015obtaining}. 
Formally, ECE is calculated by the following formula: 
\[E C E=\sum_{m=1}^{M} \frac{\left|B_{m}\right|}{n} \left| \operatorname{acc}\left(B_{m}\right)-\operatorname{conf} \left(B_{m}\right)\right|\] 
where:
$\operatorname{acc}\left(B_{m}\right)=\frac{1}{\left| B_{m}\right|} \cdot \left| \{x\in B_m : \cls(x)~\text{is correct} \}\right| $, and  $\operatorname{conf}\left(B_{m}\right)=\frac{1}{\left|B_{m}\right|} \sum_{x \in B_{m}} {\con(x)}$.

\color{black}%
%

\subsection{Empirical Study}
\subsubsection{Distance metrics}
\label{sec:metrics}

%
%



As mentioned in~\Cref{sec:sep}, the notion of geometric separation is applicable to any norm. In fact, as shown in~\Cref{prop:stab_all_l}, the fast-separation approximation provides a zone under any norm. 
Thus, we have evaluated the ECE obtained from fast-separation  under different norms. The results are given in~\Cref{tab:mainresult}. As can be observed, the ECE is low regardless of the selection of norm indicating the attractiveness of the geometric signal.  However, while some norms are more accurate for some datasets, there is no universally superior norm. Thus, the following experiments focus on the  $L_2$ norm from the reasons specified in~\cref{sec:sep}. 

\renewcommand{\arraystretch}{1.3}
\begin{table*}[t!]
\centering
\scalebox{0.7}{
\large
\begin{tabular}{cccccc}
\hline
\rowcolor{Gainsboro!90} \textbf{Dataset}          & \textbf{Model} & \textbf{$L_1$} & \textbf{$L_2$} & \textbf{$L_{\infty}$} \\ \hline
                          & CNN            & 0.17 {\tiny\textpm0.03}      & 0.18 {\tiny\textpm0.05}     & 0.08 {\tiny\textpm0.03}       \\ \rowcolor{Gainsboro!40}
                          & GB             & 0.28 {\tiny\textpm0.09}      & 0.36 {\tiny\textpm0.07}     & 0.37 {\tiny\textpm0.06}       \\ \multirow{-3}{*}{\rotatebox[origin=c]{90}{MNIST}}   
						 & RF             & 0.37 {\tiny\textpm0.07}      & 0.39 {\tiny\textpm0.06}     & 0.37 {\tiny\textpm0.05}       \\ \hline    
                          & CNN            & 0.38 {\tiny\textpm0.14}      & 0.42 {\tiny\textpm0.15}     & 0.36 {\tiny\textpm0.16}       \\ \rowcolor{Gainsboro!40}
                          & GB             & 1.08 {\tiny\textpm0.19}      & 0.65 {\tiny\textpm0.11}     & 0.41 {\tiny\textpm0.09}       \\ \multirow{-3}{*}{\rotatebox[origin=c]{90}{GTSRB}}
						 & RF             & 0.54 {\tiny\textpm0.19}      & 0.37 {\tiny\textpm0.04}     & 0.32 {\tiny\textpm0.05}       \\ \hline 
                          & CNN            & 0.05 {\tiny\textpm0.03}      & 0.05 {\tiny\textpm0.04}     & 0.05 {\tiny\textpm0.03}       \\ \rowcolor{Gainsboro!40}
                          & GB             & 0.00 {\tiny\textpm0.00}      & 0.08 {\tiny\textpm0.03}     & 0.17 {\tiny\textpm0.02}       \\ \multirow{-3}{*}{\rotatebox[origin=c]{90}{SignLang}}
						 & RF             & 0.00 {\tiny\textpm0.00}      & 0.08 {\tiny\textpm0.02}     & 0.14 {\tiny\textpm0.03}       \\ \hline 
                          & CNN            & 0.74 {\tiny\textpm0.07}      & 0.79 {\tiny\textpm0.15}     & 0.55 {\tiny\textpm0.12}       \\ \rowcolor{Gainsboro!40}
                          & GB             & 0.64 {\tiny\textpm0.21}      & 0.73 {\tiny\textpm0.13}     & 0.85 {\tiny\textpm0.17}       \\ \multirow{-3}{*}{\rotatebox[origin=c]{90}{Fashion}} 
						 & RF             & 0.68 {\tiny\textpm0.13}      & 0.74 {\tiny\textpm0.16}     & 0.83 {\tiny\textpm0.14}       \\ \hline 
                          & CNN            & 1.20 {\tiny\textpm0.19}      & 1.20 {\tiny\textpm0.30}     & 3.03 {\tiny\textpm0.79}       \\ \rowcolor{Gainsboro!40}
                          & GB             & 1.59 {\tiny\textpm0.51}      & 1.25 {\tiny\textpm0.21}     & 1.17 {\tiny\textpm0.24}       \\ \multirow{-3}{*}{\rotatebox[origin=c]{90}{CIFAR10}}
						 & RF             & 1.08 {\tiny\textpm0.45}      & 1.15 {\tiny\textpm0.24}     & 1.30 {\tiny\textpm0.18}       \\ \hline 
\end{tabular}
}
\caption{ECE(\%) measures with $95\%$ confidence intervals comparing the results of the fast-separation-based method using $L_1,L_2$ and $L_{\infty}$ norms.}
\label{tab:norms}
\end{table*}
\subsubsection{Fitting Function}

%
As mentioned in~\Cref{sec:alg}, for our fitting function we can  use any existing  calibration function.
Post-hoc calibration methods based on fitting functions typically use either a logistic (Sigmoid) or an isotonic regression~\cite{Zadorozny2002Transforming}.
Isotonic regression fits a non-decreasing free-form line to a sequence of observations. In comparison, Sigmoid is a continuous step function.  
We used both fitting functions on our fast-separation values and obtained similar accuracy. 
We opt here to present the isotonic regression as it provides the best empirical results, as motivated by~\Cref{fig:Sigmoid}.

\Cref{fig:Sigmoid} illustrates an example of the success ratio of the Random Forest model for MNIST inputs with varying values of $\stab$ scores (similar behavior was observed for the various models and datasets).  We clustered inputs with a similar score together (into 50 bins overall) as each classification is correct or not, and we are looking for the average.  The black line represents the Sigmoid function, and the green line represents the isotonic regression. 
As can be observed, both regressions are nearly identical on all the points with positive $\stab$ values. We eventually chose isotonic regression because it better fitted the few points with negative $\stab$ values. Interestingly, these points were consistently a poor fit for the Sigmoid regression rendering it slightly less accurate on average. 
Also, observe that the transition is around the value 0, indicating that the distinction between safe and dangerous points is meaningful in confidence evaluation.
 \begin{figure}[t!]
     \centering
     \includegraphics[width=0.7\textwidth]{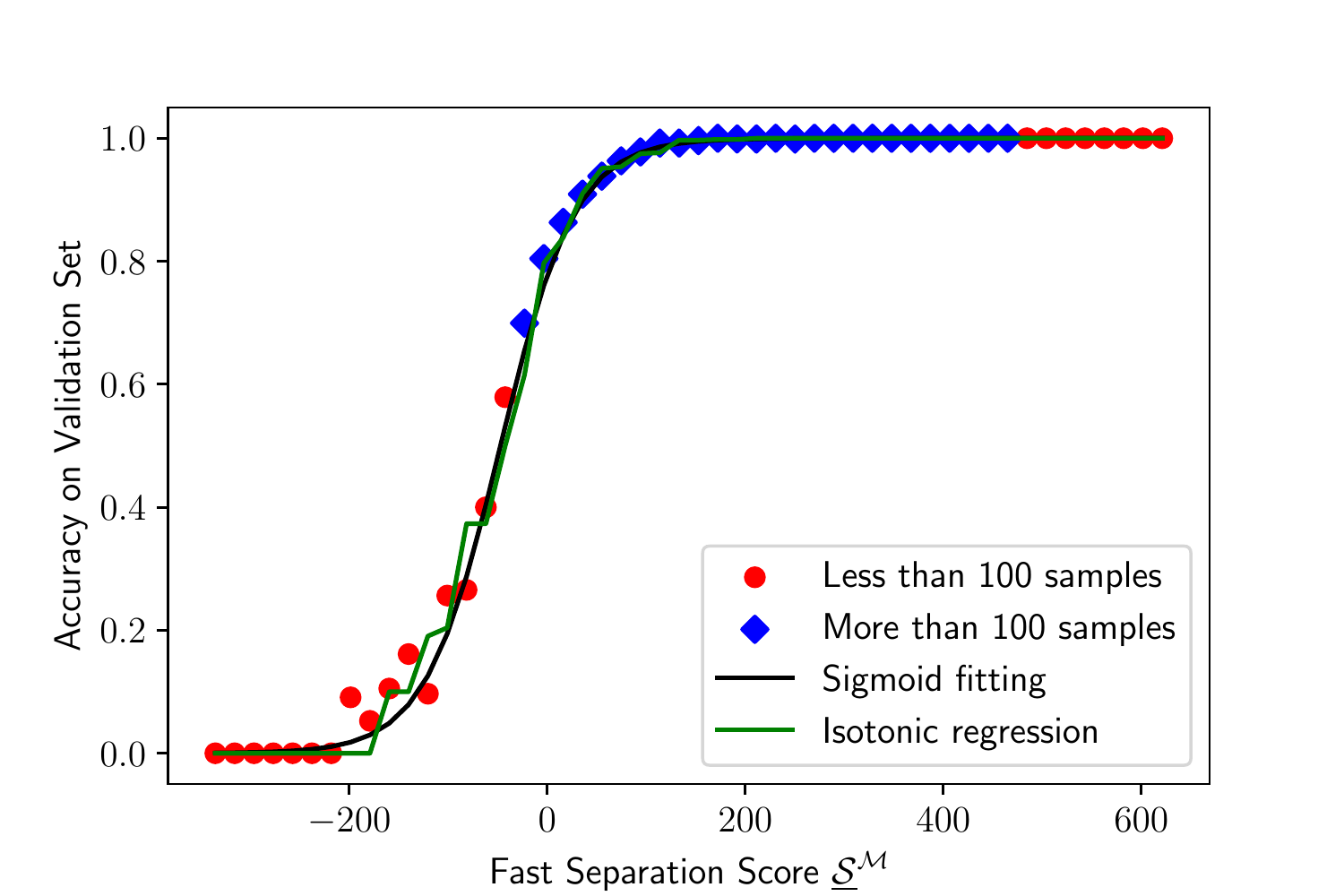}
     \caption{An illustration of the inputs to the fitting function (blue diamonds and red dots), and the functions fitted by Sigmoid (black line) and isotonic regression (green line). The inputs are for the MNIST dataset and the Random Forest model.
      }
     \label{fig:Sigmoid}
 \end{figure}

%
%

\subsection{Confidence Evaluation}
\setlength{\tabcolsep}{5.5pt}
\renewcommand{\arraystretch}{1.5}
\begin{table*}[t!]
\centering
\caption{ECE(\%) measures with $95\%$ confidence intervals when varying the calibration method, model, and dataset.}
\resizebox{\textwidth}{!}{

\LARGE
\begin{tabular}{cccccccccccccc}
\hline

\rowcolor{Gainsboro!90}
\LARGE{\textbf{Dataset}}   & 
\LARGE{\textbf{Model}} & 
\LARGE{\textbf{$\mathbf{\stab}$}}  & 
\LARGE{\textbf{$\mathbf{\sep}$}}   & 
\LARGE{\textbf{Iso}}  & 
\LARGE{\textbf{Platt}}  & 
\LARGE{\textbf{SBC}}  & 
\LARGE{\textbf{HB}}  &
\LARGE{\textbf{BBQ}} &
\LARGE{\textbf{Beta}} &
\LARGE{\textbf{TS}}  & 
\LARGE{\textbf{ETS}} 
\\ \hline
                                      & CNN    &         \textbf{0.15\small{±.01}}  &        \textbf{0.15\small{±0.01}} &            0.17\small{±0.01} &        0.52\small{±0.04} &          8.91\small{±0.16} &       0.32\small{±0.02}    &   0.22\small{±0.01}  &  0.64\small{±0.02}  &     0.20\small{±0.01}   &    0.20\small{±0.01} \\         
\rowcolor{Gainsboro!40}               & RF     &         \textbf{0.35\small{±.02}}  &                 0.36\small{±0.02} &            0.92\small{±0.03} &        1.49\small{±0.02} &          3.92\small{±0.11} &       0.46\small{±0.02}    &   1.13\small{±0.03}  &  0.37\small{±0.02}  &             -           &     -               \\        
                                      & GB     &         \textbf{0.34\small{±.02}}  &        \textbf{0.34\small{±0.02}} &            1.74\small{±0.03} &        1.97\small{±0.03} &          8.46\small{±0.07} &       0.45\small{±0.02}    &   0.65\small{±0.03}  &  0.47\small{±0.02}  &             -           &     -               \\  \hline \multirow{-5}{*}{\rotatebox[origin=c]{90}{MNIST}}        
                                      & CNN    &         \textbf{0.37\small{±.04}}  &        \textbf{0.37\small{±0.04}} &            0.38\small{±0.04} &        2.83\small{±0.53} &         29.01\small{±0.49} &       1.22\small{±0.18}    &   1.08\small{±0.21}  &  1.98\small{±0.25}  &     0.90\small{±0.11}   &    0.77\small{±0.09} \\           
\rowcolor{Gainsboro!40}               & RF     &         \textbf{0.37\small{±.02}}  &                 0.38\small{±0.02} &            2.55\small{±0.04} &        4.19\small{±0.03} &         13.99\small{±0.11} &       0.85\small{±0.05}    &   3.08\small{±0.04}  &  0.56\small{±0.03}  &             -           &     -               \\             
                                      & GB     &         \textbf{0.61\small{±.03}}  &                 0.63\small{±0.03} &           10.04\small{±0.07} &       19.63\small{±0.83} &         31.25\small{±0.12} &       1.42\small{±0.05}    &   9.28\small{±0.11}  &  5.36\small{±0.10}  &             -           &     -               \\ \hline  \multirow{-5}{*}{\rotatebox[origin=c]{90}{GTSRB}}      
                                      & CNN    &         \textbf{0.09\small{±.05}}  &                 0.10\small{±0.06} &   \textbf{0.09\small{±0.05}} &        0.12\small{±0.07} &         17.77\small{±0.21} &       1.24\small{±1.03}    &   1.24\small{±1.03}  &  1.24\small{±1.04}  &     0.11\small{±0.01}   &    0.12\small{±0.01} \\        
\rowcolor{Gainsboro!40}               & RF     &         \textbf{0.08\small{±.01}}  &        \textbf{0.08\small{±0.01}} &            0.46\small{±0.02} &        1.76\small{±0.02} &         17.34\small{±0.18} &       0.16\small{±0.02}    &   0.86\small{±0.02}  &  0.29\small{±0.01}  &             -           &     -               \\        
                                      & GB     &         \textbf{0.07\small{±.01}}  &        \textbf{0.07\small{±0.01}} &            4.01\small{±0.06} &        5.93\small{±0.06} &         31.01\small{±0.08} &       0.46\small{±0.03}    &   0.78\small{±0.05}  &  0.70\small{±0.03}  &             -           &     -               \\ \hline   \multirow{-5}{*}{\rotatebox[origin=c]{90}{SignLang}}     
                                      & CNN    &                  0.75\small{±.03}  &                 0.75\small{±0.04} &   \textbf{0.71\small{±0.03}} &        6.60\small{±0.72} &          7.36\small{±0.20} &       1.10\small{±0.05}    &   2.18\small{±0.15}  &  9.15\small{±0.10}  &     0.82\small{±0.04}   &    0.89\small{±0.04} \\        
\rowcolor{Gainsboro!40}               & RF     &         \textbf{0.78\small{±.04}}  &                 0.82\small{±0.04} &            1.03\small{±0.05} &      3.75\small{±0.04}	&          3.52\small{±0.10} &       1.07\small{±0.05}    &   1.23\small{±0.05}  &  0.83\small{±0.03}  &             -           &     -               \\             
                                      & GB     &         \textbf{0.79\small{±.04}}  &        \textbf{0.79\small{±0.04}} &            3.82\small{±0.06} &        5.01\small{±0.65}	&          3.90\small{±0.12} &       1.01\small{±0.04}    &   1.41\small{±0.05}  &  0.97\small{±0.05}  &             -           &     -               \\ \hline    \multirow{-5}{*}{\rotatebox[origin=c]{90}{Fashion}}  
                                      & CNN    &         \textbf{1.12\small{±.07}}  &                 1.16\small{±0.07} &            1.28\small{±0.06} &        6.05\small{±0.21}	&          3.57\small{±0.10} &       4.10\small{±0.10}    &   5.31\small{±0.24}  & 24.76\small{±0.21}  &     3.68\small{±0.08}   &    3.45\small{±0.12} \\        
\rowcolor{Gainsboro!40}               & RF     &         \textbf{1.37\small{±.07}}  &                 1.38\small{±0.07} &            3.33\small{±0.07} &        4.54\small{±0.09} &          3.01\small{±0.09} &       2.27\small{±0.09}    &   3.84\small{±0.08}  &  1.65\small{±0.06}  &             -           &     -               \\           
                                      & GB     &                  1.41\small{±.07}  &        \textbf{1.38\small{±0.06}} &           7.70\small{±0.08} &        8.58\small{±0.09}	&          2.63\small{±0.09} &       2.40\small{±0.10}    &   1.51\small{±0.07}  &  2.94\small{±0.08}  &             -           &     -               \\ \hline   \multirow{-5}{*}{\rotatebox[origin=c]{90}{CIFAR10}}    
\end{tabular}}
\label{tab:mainresult}
\end{table*}

\begin{table*}[t!]
\centering
\caption{Relative improvement percentage of ECE of ${\stab}$ over other calibration methods.}
\footnotesize
\begin{tabular}{cccccccccccc}
\hline

\rowcolor{Gainsboro!90}
\textbf{Dataset} & 
\textbf{Model} & 
\textbf{Iso} & 
\textbf{Platt}  & 
\textbf{SBC}  & 
\textbf{HB}  &
\textbf{BBQ} &
\textbf{Beta} &
\textbf{TS}  & 
\textbf{ETS} 
\\ \hline
                                      & CNN   &       11.8\% &    71.2\% &    98.3\% &        53.1\% &  31.8\% &  76.6\% &        25.0\% &    25.0\%  \\         
\rowcolor{Gainsboro!40}               & RF    &       62.0\% &    76.5\% &    91.1\% &        23.9\% &  69.0\% &   5.4\% &             - &       -    \\        
                                      & GB    &       80.5\% &    82.7\% &    96.0\% &        24.4\% &  47.7\% &  27.7\% &             - &       -    \\  \hline \multirow{-5}{*}{\rotatebox[origin=c]{90}{MNIST}}        
                                      & CNN   &        2.6\% &    86.9\% &    98.7\% &        69.7\% &  65.7\% &  81.3\% &        58.9\% &    51.9\%  \\           
\rowcolor{Gainsboro!40}               & RF    &       85.5\% &    91.2\% &    97.4\% &        56.5\% &  88.0\% &  33.9\% &             - &       -    \\             
                                      & GB    &       93.9\% &    96.9\% &    98.0\% &        57.0\% &  93.4\% &  88.6\% &             - &       -    \\ \hline  \multirow{-5}{*}{\rotatebox[origin=c]{90}{GTSRB}}      
                                      & CNN   &        0.0\% &    25.0\% &    99.5\% &        92.7\% &  92.7\% &  92.7\% &        18.2\% &    25.0\%  \\        
\rowcolor{Gainsboro!40}               & RF    &       82.6\% &    95.5\% &    99.5\% &        50.0\% &  90.7\% &  72.4\% &             - &       -    \\        
                                      & GB    &       98.3\% &    98.8\% &    99.8\% &        84.8\% &  91.0\% &  90.0\% &             - &       -    \\ \hline   \multirow{-5}{*}{\rotatebox[origin=c]{90}{SignLang}}     
                                      & CNN   &       -5.6\% &    88.6\% &    89.8\% &        31.8\% &  65.6\% &  91.8\% &         8.5\% &    15.7\%  \\        
\rowcolor{Gainsboro!40}               & RF    &       24.3\% &    79.2\% &    77.8\% &        27.1\% &  36.6\% &   6.0\% &             - &       -    \\             
                                      & GB    &       79.3\% &    84.2\% &    79.7\% &        21.8\% &  44.0\% &  18.6\% &             - &       -    \\ \hline    \multirow{-5}{*}{\rotatebox[origin=c]{90}{Fashion}}  
                                      & CNN   &       12.5\% &    81.5\% &    68.6\% &        72.7\% &  78.9\% &  95.5\% &        69.6\% &    67.5\%  \\        
\rowcolor{Gainsboro!40}               & RF    &       58.9\% &    69.8\% &    54.5\% &        39.6\% &  64.3\% &  17.0\% &             - &       -    \\           
                                      & GB    &       81.7\% &    83.6\% &    46.4\% &        41.2\% &   6.6\% &  52.0\% &             - &       -    \\ \hline   \multirow{-5}{*}{\rotatebox[origin=c]{90}{CIFAR10}}    
\end{tabular}
\label{tab:relative_result}
\end{table*}

This section presents the experimental results of the confidence estimation.  

\subsubsection{Estimating Confidence}

\cref{tab:mainresult} presents the main experimental results of our work. The table summarizes  ECEs for our method (with bin size $15$). 
Each entry in the table describes the ECE and the 95\% confidence interval. 
%
We highlight the most accurate method for each experiment in bold. 
In this experiment, we perform one hundred random splits of the data into train, validation, and test sets for each model and dataset. We then measure the ECE of the confidence estimation for all test set inputs, average the result and take the 95\% confidence intervals. 
\footnote{For $Plat$ and $Iso$ we used the standard SKlearn implementation. However, we used Pytorch for CNN models, and Pytorch does not have $Plat$ and $Iso$, so we implemented them for our Pytorch-based CNN models.}
First, observe that $\stab$ and $\sep$ yield very similar ECEs, and that the differences between them are usually statistically insignificant. Thus, we conclude that $\stab$ is a good approximation of $\sep$ despite being considerably  simpler to compute. 
The next interesting comparison is between $\stab$ and $Iso$. We use the same fitting function (isotonic regression) in both cases, but $Iso$ performs the calibration on the model's natural uncertainty estimation, and $\stab$ performs the calibration on geometric distances.
Our $\stab$ almost consistently improves the confidence estimations across the board compared to $Iso$, $Platt$, $SBC$, $HB$, $TS$, $ETS$, $Beta$, and $BBQ$.  Specifically, we derive improvements up to 99\% in almost all tested models and datasets. 
Such results demonstrate the potential of geometric signals to improve the effectiveness of uncertainty estimation. 



\cref{tab:relative_result} describes the improvement of our fast-separation-based method over recently proposed posthoc calibration techniques. The improvement is calculated using the ratio of the difference between our ECE and the competitor's ECE. Observe that our method always improves the alternatives except for CNNs on the Fashion dataset, where it loses by 5\%. Such results position our geometric method as a competitive approach for confidence estimation. However, note that our fast-separtion can be used alongside the existing methods.

\subsubsection{Tabular Data}
Fast-separation was designed for 
image data sets since they appear to be most governed by geometry, that is, different images will likely be geometrically separable.
%
Nonetheless, as a controlled experiment, we also tested our method on non-visual tabular data. Here, we have no apriori intuition that the geometric signal is feasible.  We used two datasets: Red wine quality~\cite{WINE}, which contains a total of twelve variables and 1,599 observations and six classes, and   airline passenger satisfaction~\cite{AIRLINE}, which  contains a total of twenty-five variables, 129,880 observations, and two classes. 

In most experiments, we saw a small improvement of ranging between 1\% to 77\% in accuracy. Thus, we conclude that our method achieves good results on tabular data as well. However, the improvement was not uniform and there were a few cases where Iso was superior to our own. Thus,  the geometric signal may also be useful for non-visual data but further investigations are required to adapt the method to various datasets.

\section{Optimizing Performance}
\label{sec:optimization}

As shown in the previous section, the fast-separation approximation yields competitive confidence estimations promptly for small and medium-sized datasets.
Nonetheless, our approach may still be too slow to handle large datasets due to the need to calculate geometric notions on the entire training set.  To address this bottleneck, we explore the impact of several standard methods for dimensionality reduction on the quality of our approach for confidence estimation.

\subsection{Handling Large Datasets}
\label{sec:opt}
Large datasets are datasets with a large number of images or with large images with many pixels. In such cases, each calculation of fast separation requires potentially going over many comparisons that slow down the process. 
Here, we explore ways to either reduce the image size, or to reduce the number of images.\footnote{One can also directly manipulate the searching algorithm to improve the calculation complexity of the nearest neighbor by, e.g.,  randomization or special data structures. We leave exploring this option for future work.}
%
The following list reviews various known techniques for reducing the dimensionality of the data, the first four reduce the number of pixels, and the last two reduce the number of images in the set used to calculate geometric distances. 

In order to have a fair comparison, we define the reduction parameter $t$ to indicate the amount of data reduced in each method. In each method, a reduction parameter $t$ implies that we reduce the dataset size by a factor of $t^2$. E.g., in the Pooling technique, we can reduce 2x2 images into a pixel reducing the image size, while K-means would reduce the number of images, and both would reduce it by a factor of four so that the total number of pixels in the set is the same for each reduction parameter value for all the methods. 
%
%




\textbf{Pooling}~\cite{pool}  is an operation that calculates a function for patches of a feature map and uses it to create a down-sampled (pooled) feature map. For example, if one wants a 2-pool of an image, one reduces its size by 2x2, and every square of 2x2 is then represented as the output of the function on the squared elements.  Some broadly used functions for pooling are average ($pool$) and maximum ($maxpool$).

%

\textbf{Principal Component Analysis ($PCA$)}~\cite{pca} linearly transforms the data into a new coordinate system where most of the variation in the data can be described with fewer dimensions than the initial data.
For reduction parameter 2 we reduce each image to a new smaller image with a reduction factor of four in the number of pixels. 

\textbf{Resizing using a Bilinear Interpolation ($RBI$)}~\cite{bilinear} is a generalization of single dimension linear interpolation. RBI performs linear interpolation in one direction and then again in the other direction. Resizing using a Bilinear Interpolation is common in computer vision applications that are based on convolutional neural networks.
For a reduction parameter 2 we resize the image to a new image in which both the length and width are two times smaller, ending with an image four times smaller than the original one. 

\textbf{Sampling random pixels  ($Rand_{pix}$)} reduces the number of pixels in the metadata by a random sample. Notice that this approach can be viewed as the baseline for other pixel-reducing techniques. We chose the number of pixels sampled to be the original pixel number divided by the squared reduction parameter.

\textbf{K-means}~\cite{kmeans} clustering is a vector quantization method aiming to partition $n$ observations into $k$ clusters in which each observation belongs to the cluster with the nearest mean (cluster centroid). K-means clustering minimizes variances in the clusters (squared Euclidean distances). Here, we set $k$ to be the reduced dimension of the compressed dataset. E.g., if the original dataset had 10,000 images, and we set $k=1,000$, we get a reduction factor of x10 from $10,000$ dimensions to $1,000$. 
When using this method we first find the centroids of the dataset, and then use these as the metadata for calculating geometric separation.


\textbf{Sampling the training set ($Rand_{set}$)}
reduces the number of inputs in the training set, by picking a random sample. We chose the sample size to be the dataset size divided by the squared reduction parameter.

\subsection{Experimental Results}

\begin{figure}[t!]
 \centering
    \begin{tabular}{ccc}
    \subfloat[MNIST]{\includegraphics[width=0.31\linewidth]{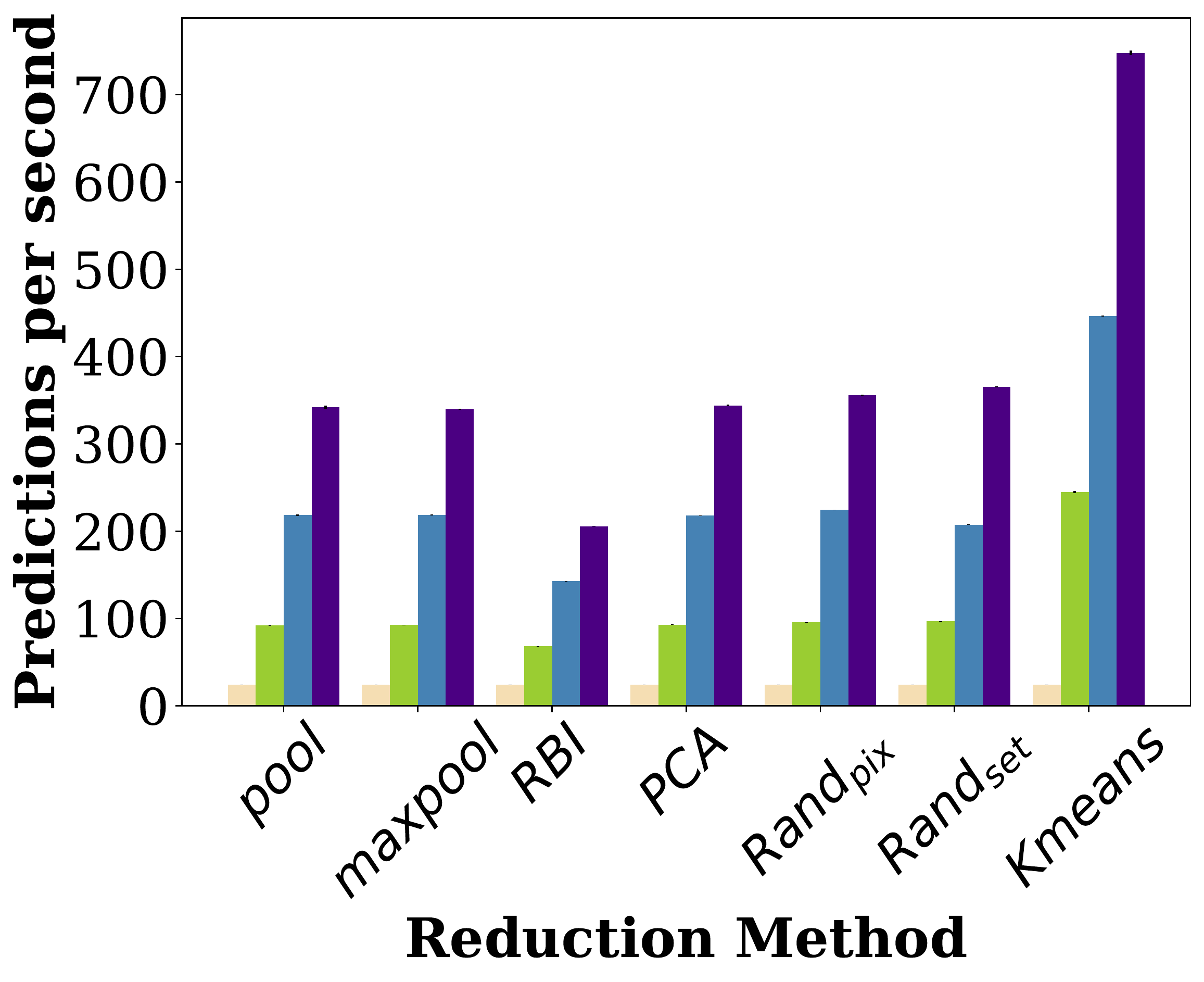}} &
    \subfloat[Fashion]{\includegraphics[width=0.31\linewidth]{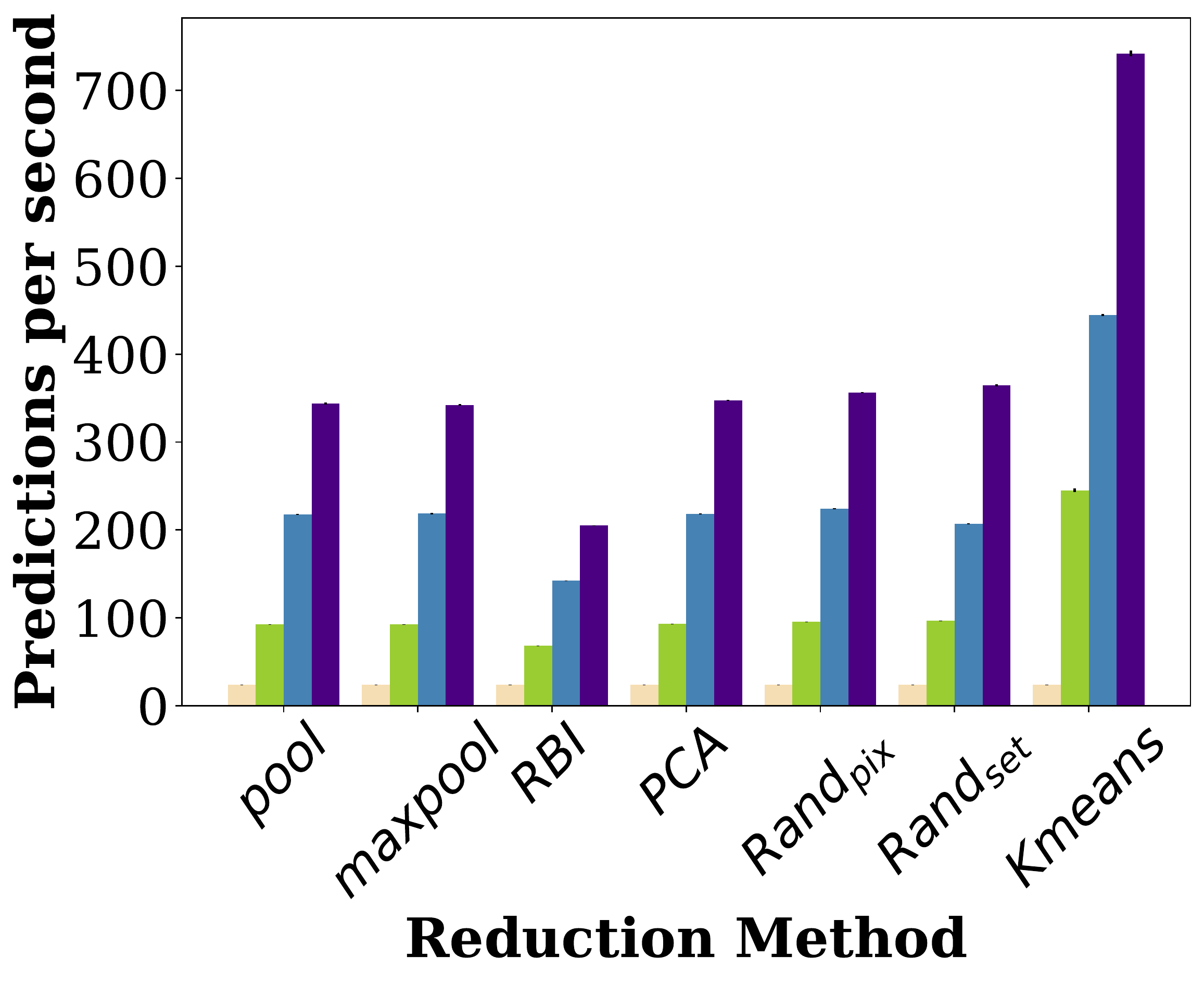} } &
  \subfloat[GTSRB]{\includegraphics[width=0.31\linewidth]{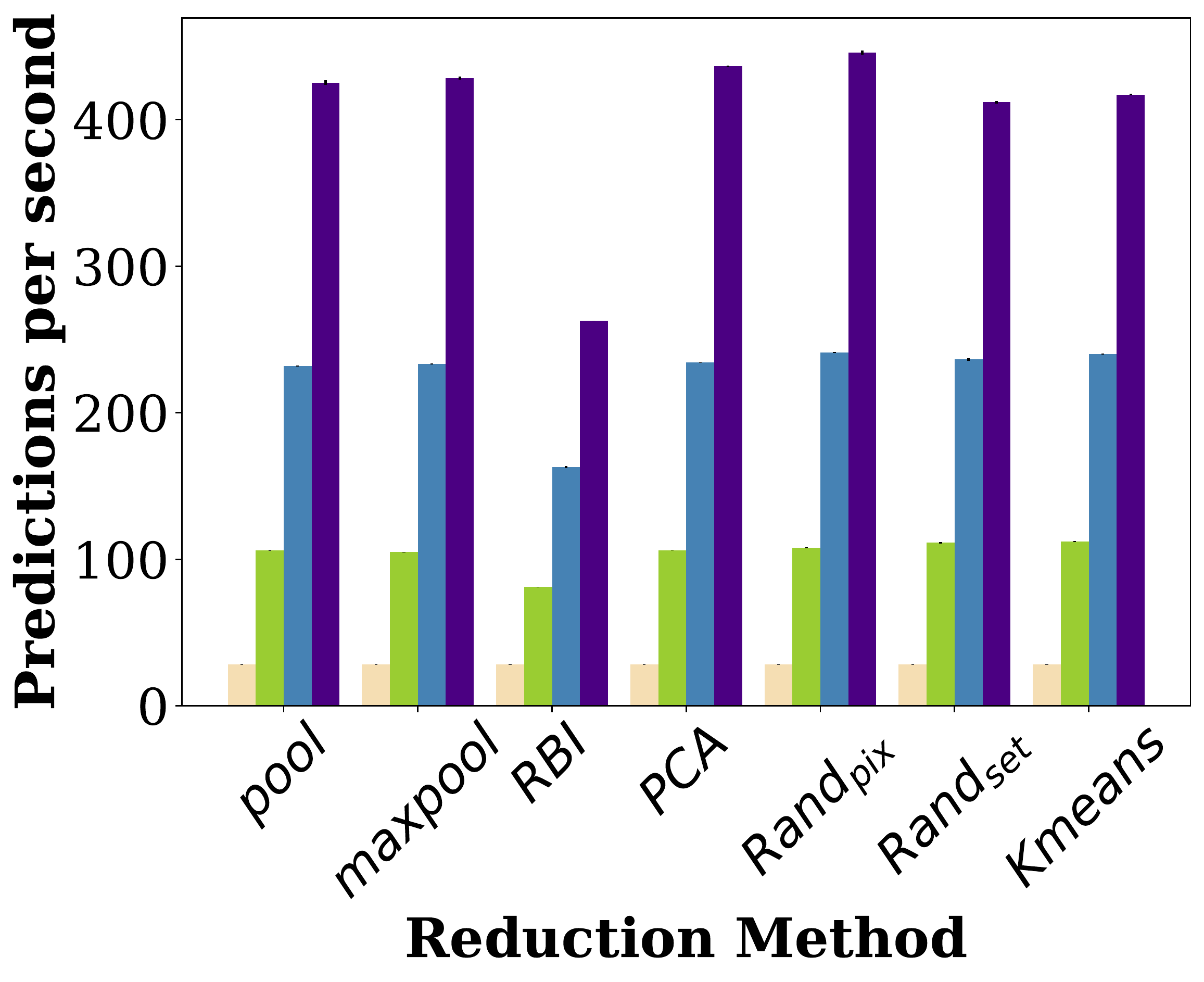}} \\  
  \includegraphics[width=0.1\linewidth]
{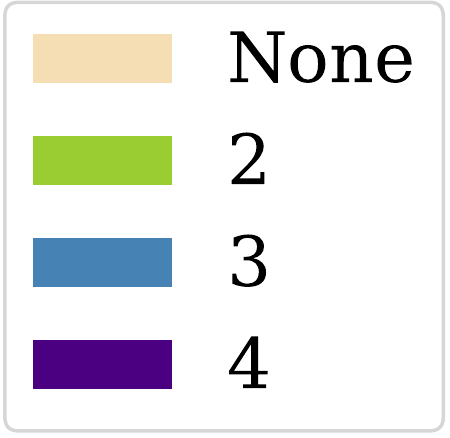}&
    \subfloat[CIFAR10]{\includegraphics[width=0.31\linewidth]{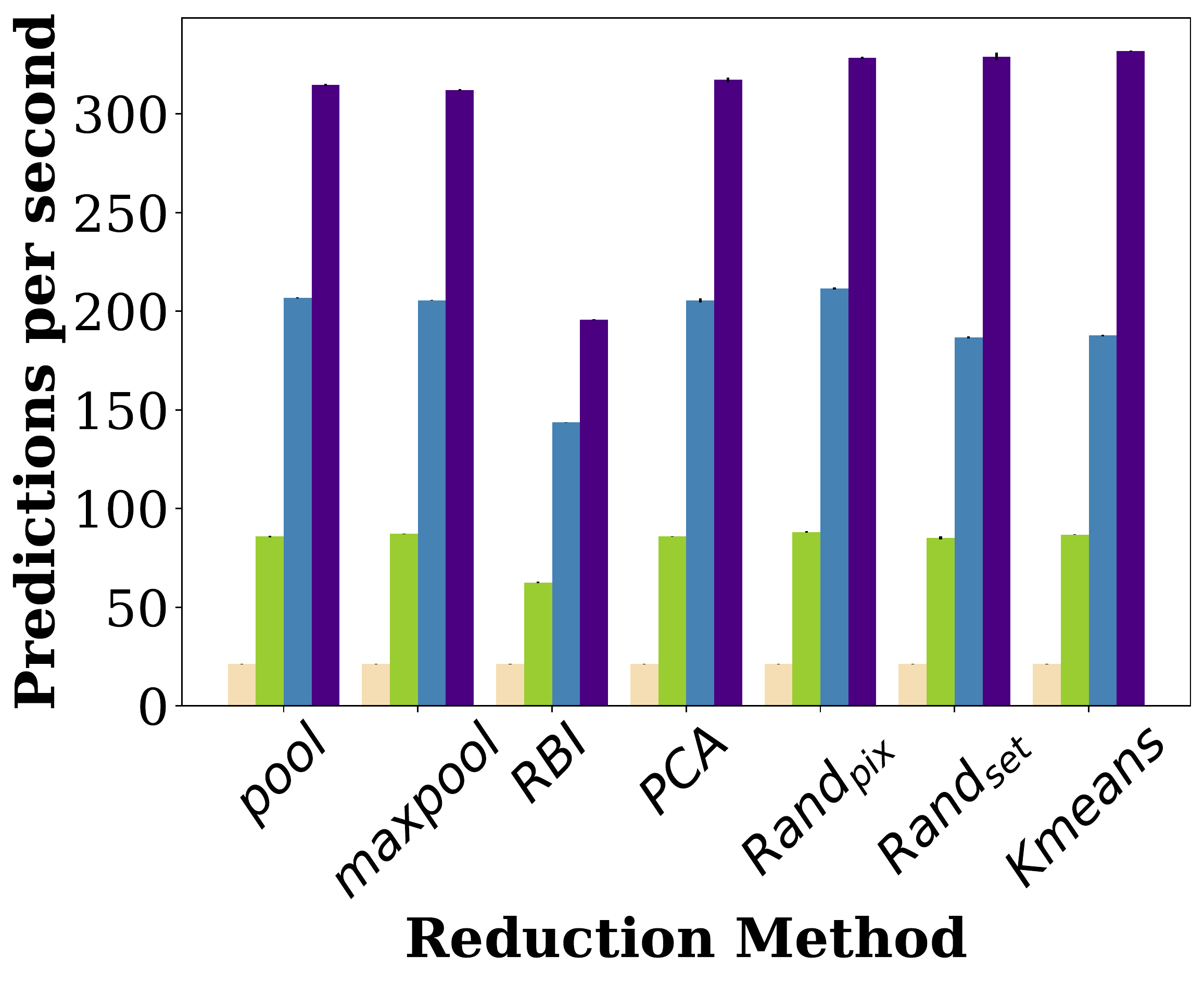} } &
  
  \subfloat[SignLanguage]{\includegraphics[width=0.31\textwidth]{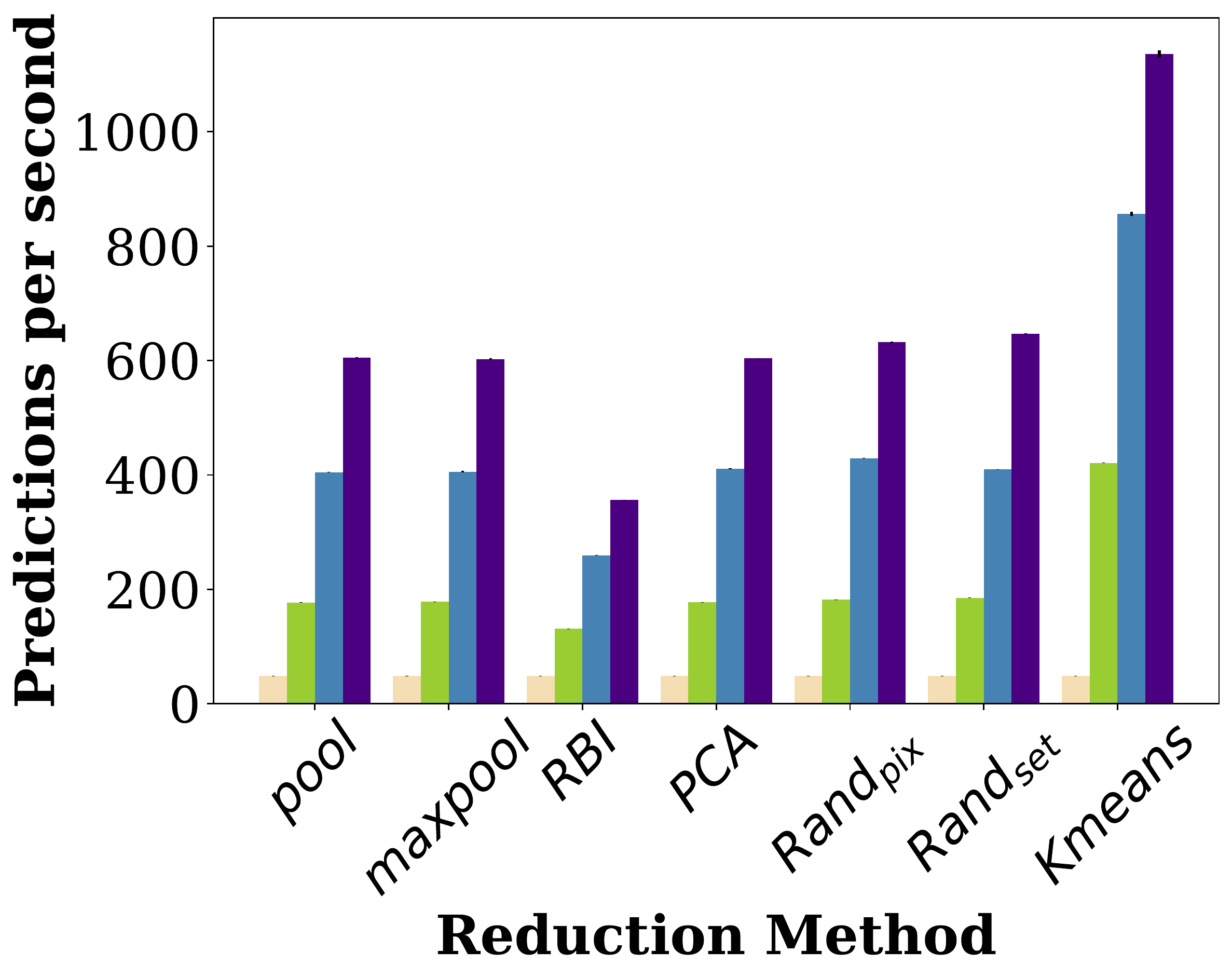} } 
    \end{tabular}
    \caption{Time comparison between various reduction methods on all datasets with Random Forest model. The error bar show the 95\% confidence interval over 10 shuffles.   The colors denote the various reduction parameters.}
    \label{fig:Time-allmethods}
\end{figure}

To evaluate the effect of these reductions on our algorithm, we apply the reduction to the whole dataset and then calculate the fast-separation values on the reduced dataset. Note that the models are trained on the original dataset, so accuracy is not affected.
For RGB images, we further changed the color to grayscale images, which reduced the size of images by a factor of 3 while keeping the image as close as possible to the original one.
The experiments were executed on a desktop PC with Intel(R) 16 Cores(TM) i7-10700 CPU @ 2.90GHz, and 16GB RAM.

\cref{fig:Time-allmethods,fig:ECE-allmethods} show a comparison of the speed and accuracy of the various methods in the Random Forest model.
As shown in~\Cref{fig:Time-allmethods},  all data optimizations increase the number of predictions per second, and we can readily reach several hundred estimations per second which is a sufficient speedup for our needs.
All methods show almost the same speedup on the algorithm for each hyperparameter value,  except for k-means which sometimes has a better speedup and RBI which has a slightly lower speedup. 
Since there is little variability in the experiments, the confidence intervals are barely visible.


%


Our experiments show that the time performance is not affected by the model. 
Thus,~\Cref{fig:Time-allmethods} presents only the results for the Random forest model, i.e., the average number of predictions per second. Observe that the number of predictions per seconds is the same for all other models. 

%
Importantly, this improvement in runtime does not come at a meaningful cost for the confidence estimation, as shown in~\Cref{fig:ECE-allmethods}.
While the error in the calibration estimation  slightly changes across different methods and reduction parameters, the changes seem insignificant.
Specifically, in the SignLanguage dataset we observe an increase in the ECE, which we believe is due to the fact that the original dataset is already quite small, rendering the datasize optimization pointless. 
Our experiments also show similar behavior across different models. For example,~\Cref{fig:ECE-Maxpool} shows that all three models obtain similar errors with maxpooling with different parameters. 
Moreover, our results outperform most state-of-the-art algorithms even with a 4-pool.

\begin{figure}[t!]
 \centering
    \begin{tabular}{ccc}
    \subfloat[MNIST]{\includegraphics[width=0.31\linewidth]{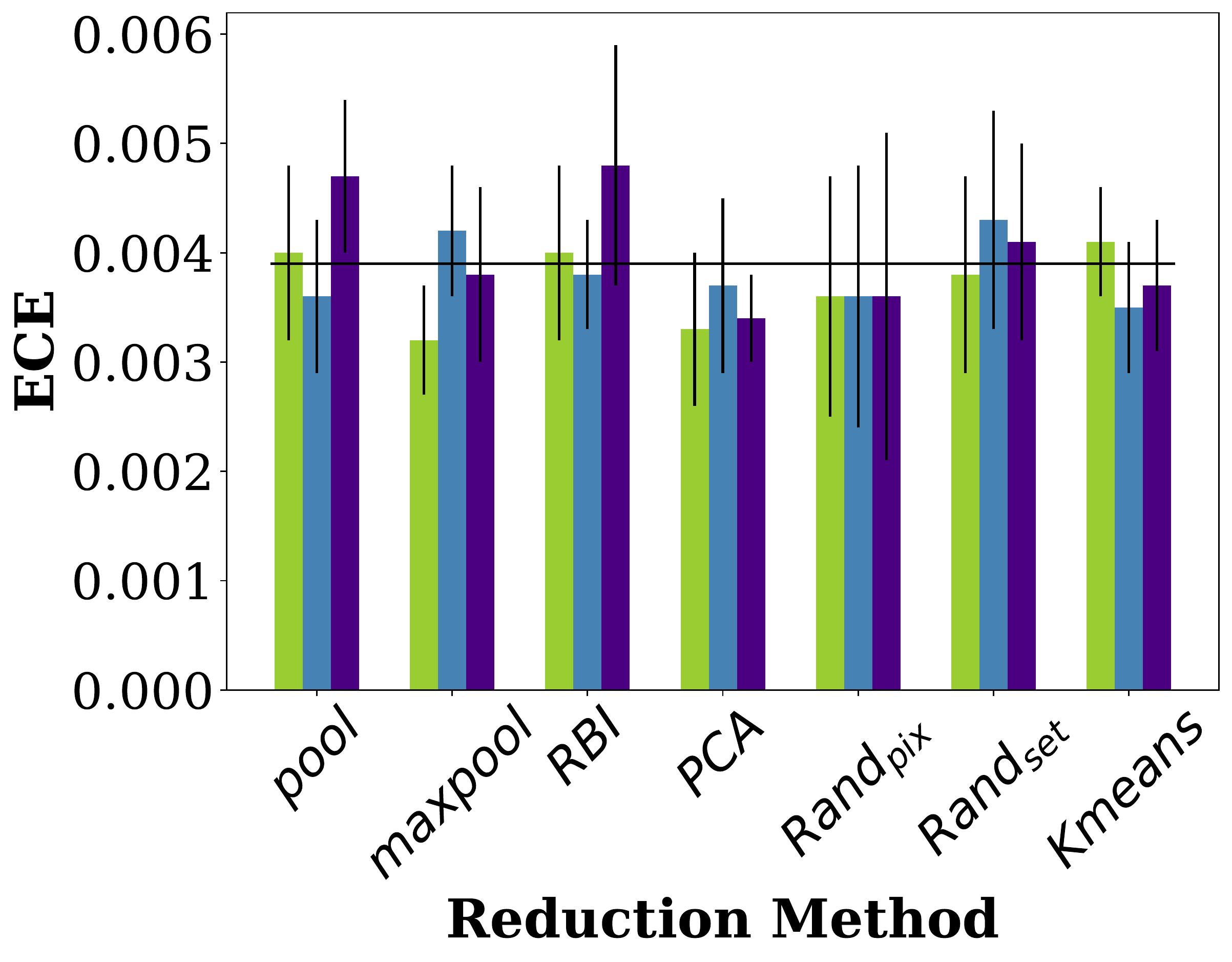}} &
    \subfloat[Fashion]{\includegraphics[width=0.31\linewidth]{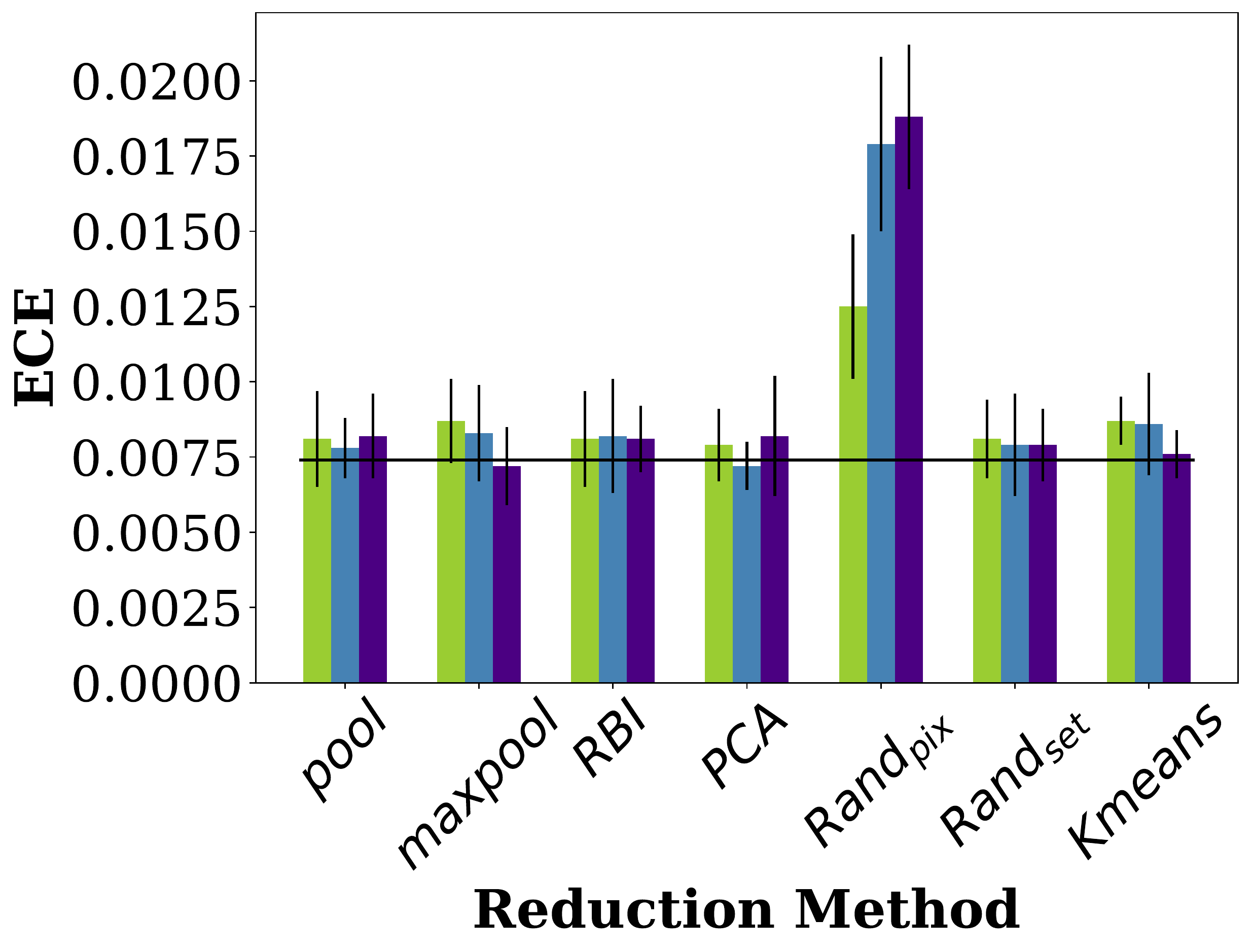}}  &
   \subfloat[GTSRB]{\includegraphics[width=0.31\linewidth]{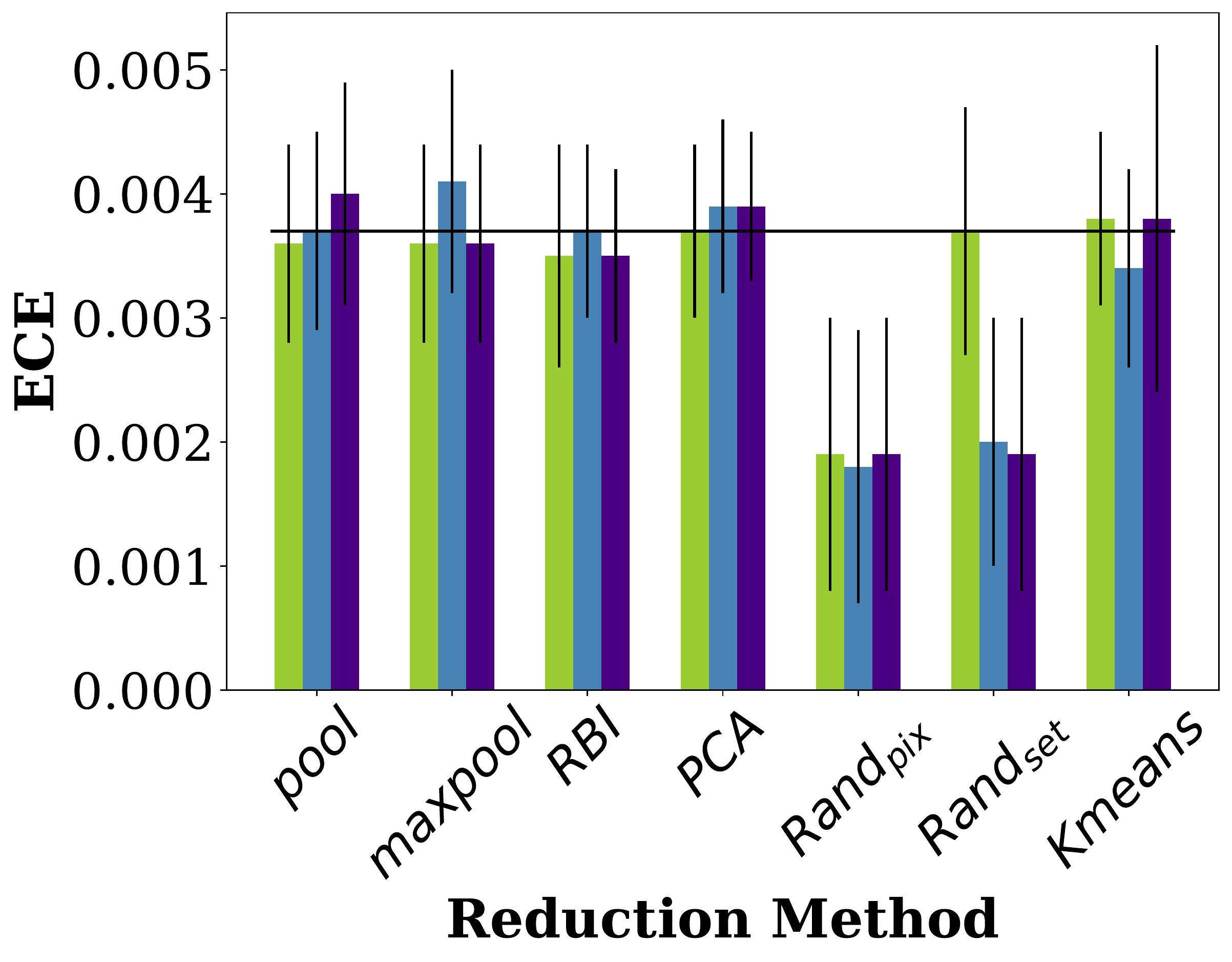}} \\
   
    \includegraphics[width=0.1\linewidth]{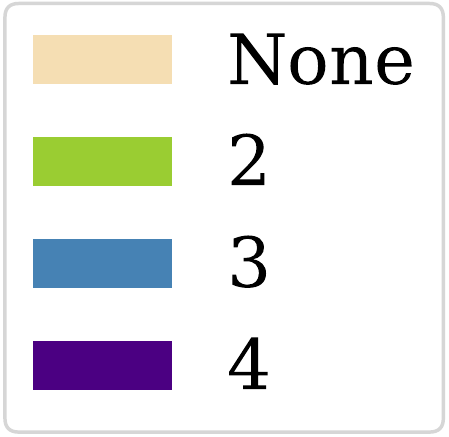} &
   \subfloat[CIFAR10]{\includegraphics[width=0.31\linewidth]{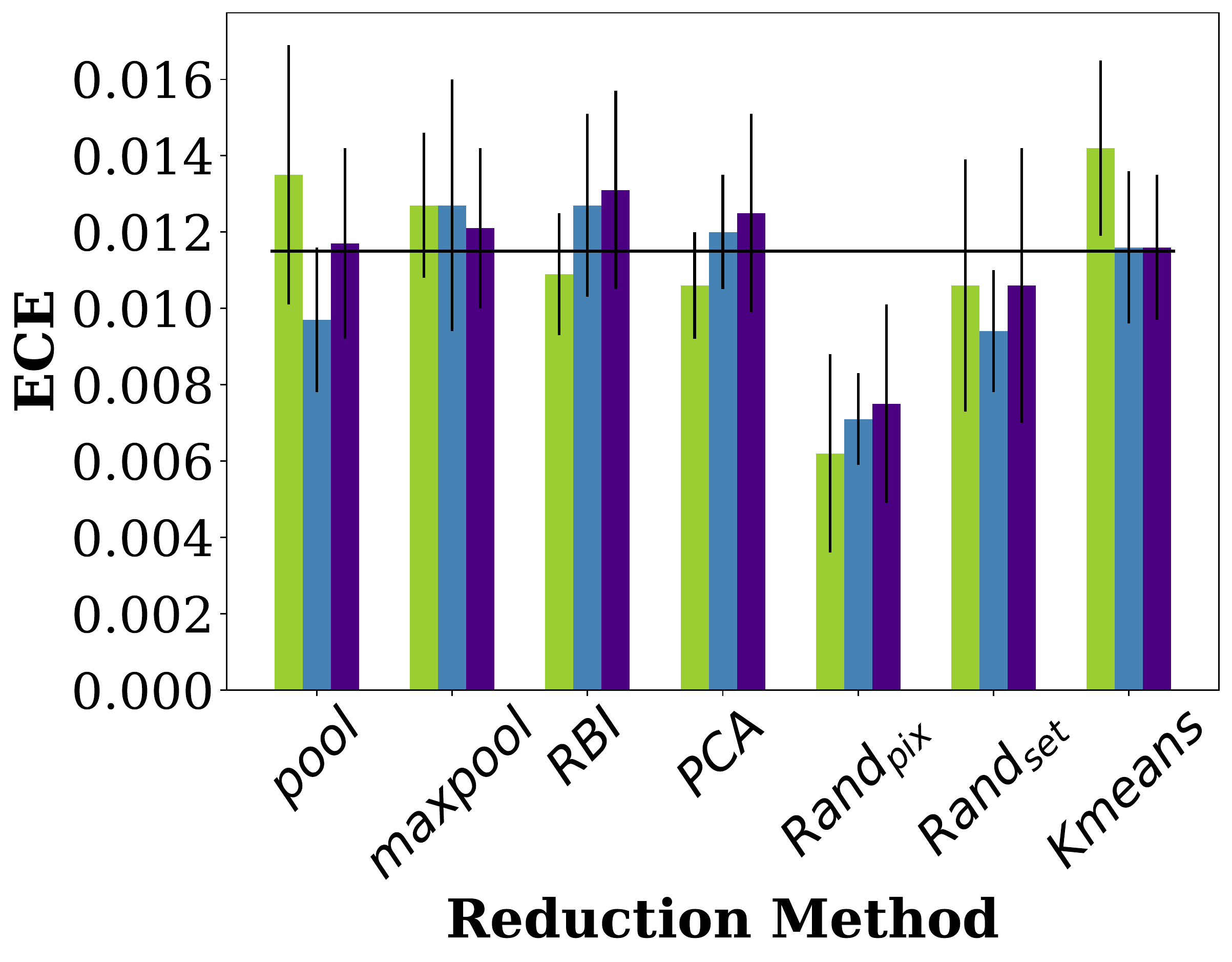}}  &

   \subfloat[SignLanguage]{\includegraphics[width=0.31\linewidth]{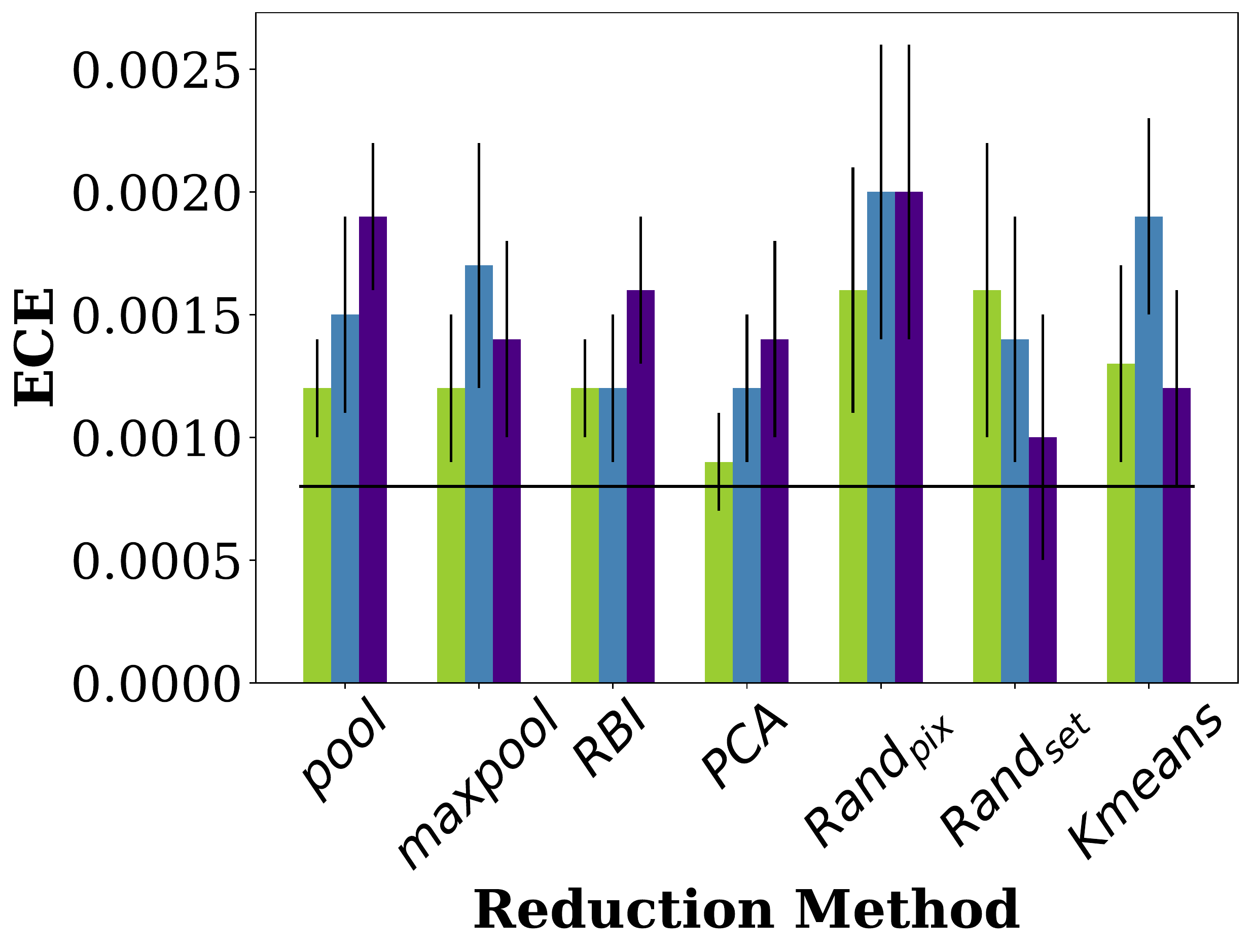}} 
    \end{tabular}
    \caption{ECE comparison between various reduction methods on all datasets with Random forest model. The error bar show the 95\% confidence interval over 10 shuffles.
    The black line shows the ECE score of the method without any reduction. The colors denote the various reduction parameters.}
    \label{fig:ECE-allmethods}
\end{figure}




\begin{figure}[t!]
 \centering
    \begin{tabular}{ccc}
    \subfloat[GB]{\includegraphics[width=0.31\linewidth]{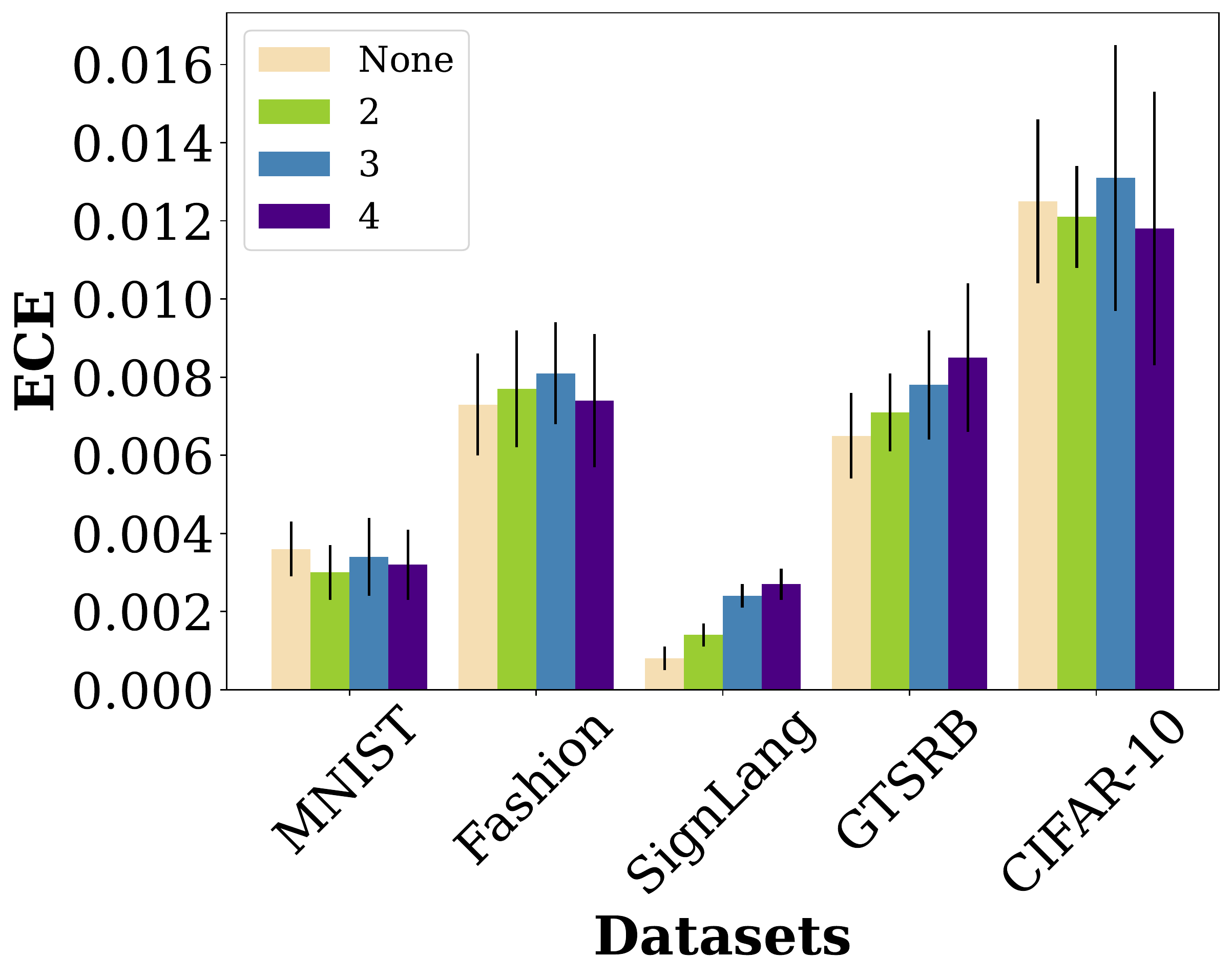}} &
    \subfloat[RF]{\includegraphics[width=0.31\linewidth]{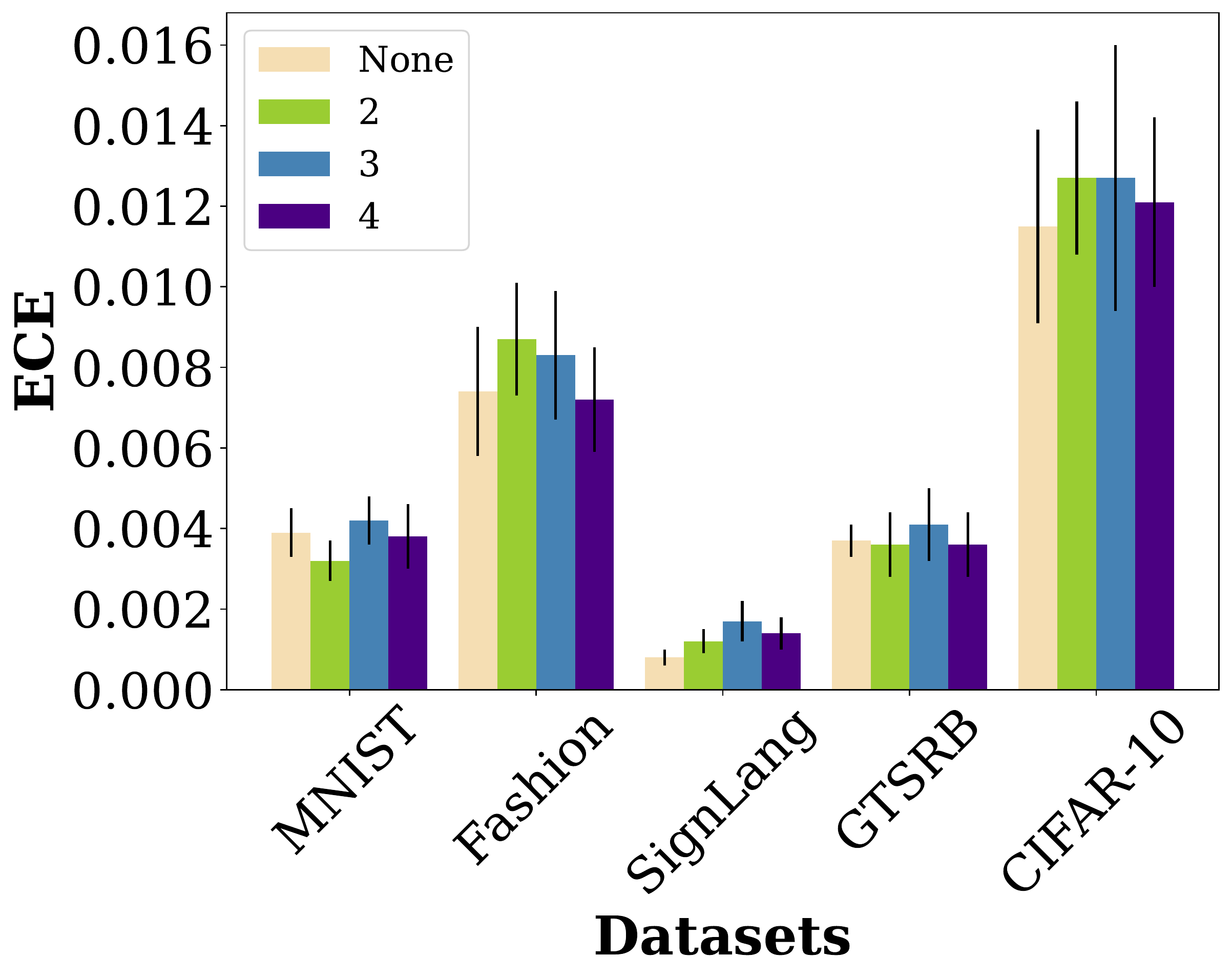}} &
  \subfloat[CNN]{\includegraphics[width=0.31\linewidth]{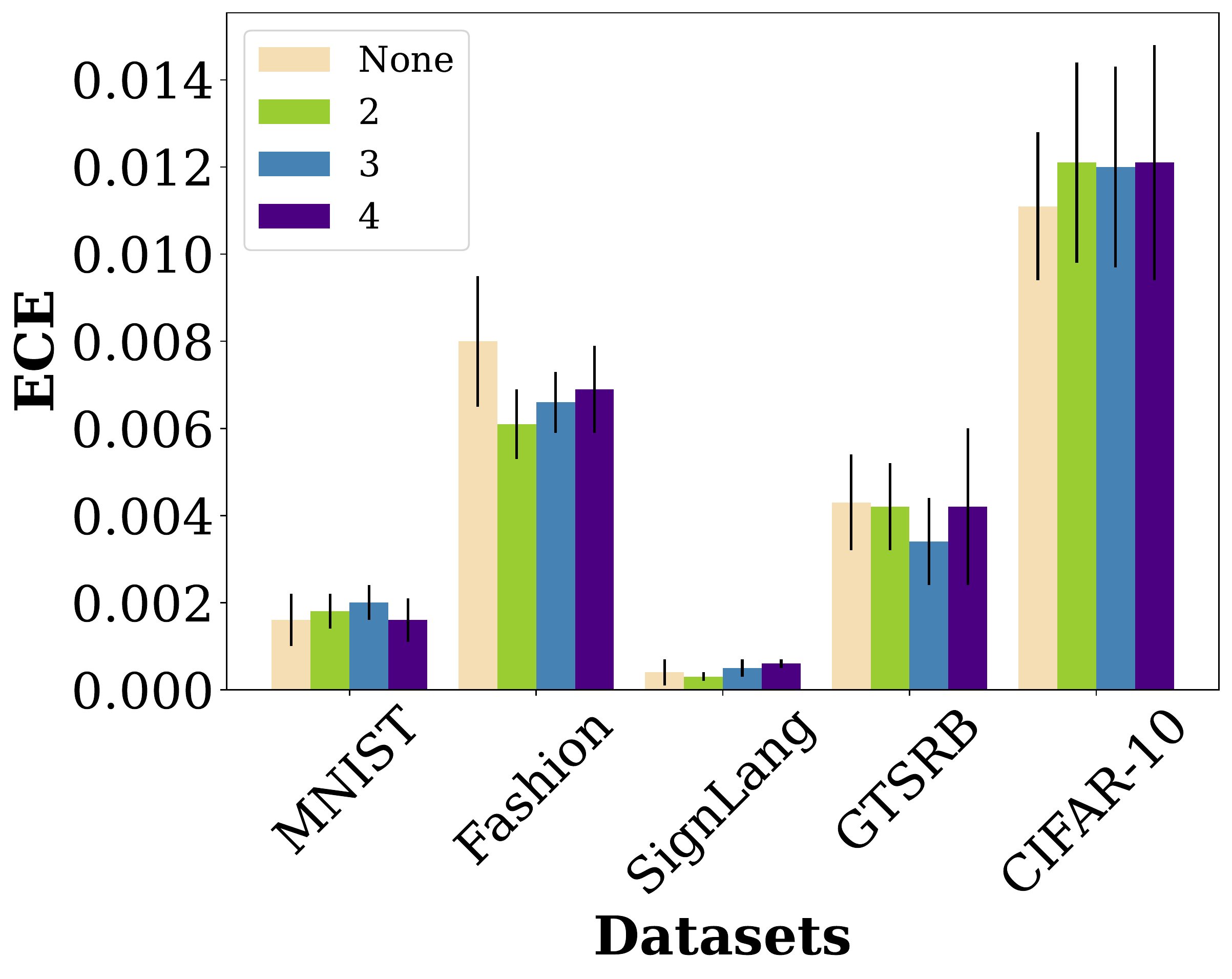} }
    \end{tabular}
    \caption{ECE measures with $95\%$ confidence intervals on 10 shuffles for various datasets on several models after applying max-pooling with different reduction parameters. }
    \label{fig:ECE-Maxpool}
\end{figure}

When using our method one needs to ship besides the model and the fitting function also the dataset itself, since the calculation of the fast-separation requires the calculation of distances to all images in the training set. This may imply memory overhead, which can be critical when using big datasets. 
Using a reduction of the dataset allows us to reduce the training set size needed to be shipped. As we have shown, 
the reduced dataset still obtains improved results, thus freeing memory usage of the algorithm.
%
Users can also predetermine the trade-off they would like between throughput or memory and ECE and adjust the reduction parameters accordingly.


\section{Conclusion}
Our work introduces geometric separation-based algorithms for confidence estimation in machine learning models. Specifically, we measure a geometric separation score and use the specific model to translate each score value into a confidence value using a standard post-hoc calibration method. Thus, inputs close to training set examples of the same class receive higher confidence than those close to examples with a different classification. Thus, our algorithms depend on the specific model but as a black box resulting in methods that work for all machine learning models.

Our evaluation shows that geometric separation improves confidence estimations in visual workloads. However, calculating geometric separation is computationally complex and time intensive. Thus, we suggest multiple approximation techniques to speed up the process and bring it to practicality. 
Our extensive evaluation shows that such approximations retain most of the benefits of geometric separations and drastically improve confidence estimation along with supporting many calculations per second, enabling real-time applications. For example, we can process live camera feeds at multiple hundreds of calculations per second.



Our work is unique because it extracts a new external signal to derive confidence estimations. Thus, we can leverage the existing post-hoc calibration techniques to calibrate our signal and meet various optimization criteria. 
We showed that the same calibration method (Isotonic regression) yields a lower ECE when performed on the geometric signal rather than on the model's original signal. The achieved accuracy improves on a diverse set of recently proposed calibration methods. Notably, our approach reduces the error in confidence estimations by up to 99\% compared to alternative methods (depending on the specific dataset and model).   

Looking into the future, we plan to address the dependence of this work on normalized inputs and tackle datasets with variable-sized images. In such datasets, the geometric distances may also depend on the resolution and alignment of the object. As a director, we plan to use the CNN middle layer latent space as the feeding vector (rather than the original images) in the geometric separation calculation. In any case, the ability to derive fast approximations of geometric separation would be a valuable tool in future research.  
\bibliography{Chouraqui_457}

\begin{thebibliography}{62}
\providecommand{\natexlab}[1]{#1}
\providecommand{\url}[1]{\texttt{#1}}
\expandafter\ifx\csname urlstyle\endcsname\relax
  \providecommand{\doi}[1]{doi: #1}\else
  \providecommand{\doi}{doi: \begingroup \urlstyle{rm}\Url}\fi

\bibitem[Ashukha et~al.(2020)Ashukha, Lyzhov, Molchanov, and
  Vetrov]{Ashukha2020Pitfalls}
A.~Ashukha, A.~Lyzhov, D.~Molchanov, and D.~Vetrov.
\newblock {Pitfalls of In-Domain Uncertainty Estimation and Ensembling in Deep
  Learning}.
\newblock In \emph{International Conference on Learning Representations}, 2020.

\bibitem[Bergstra and Yoshua(2012)]{bergstra2012random}
J.~Bergstra and B.~Yoshua.
\newblock Random search for hyper-parameter optimization.
\newblock \emph{Journal of machine learning research}, 13\penalty0 (2), 2012.

\bibitem[Breiman(2001)]{RFTheory}
L.~Breiman.
\newblock {Random Forests}.
\newblock \emph{Machine Learning}, 45\penalty0 (1):\penalty0 5--32, 2001.

\bibitem[Cortez et~al.(2009)Cortez, Cerdeira, Almeida, Matos, and Reis]{WINE}
P.~Cortez, A.~Cerdeira, F.~Almeida, T.~Matos, and J.~Reis.
\newblock Modeling wine preferences by data mining from physicochemical
  properties.
\newblock 2009.
\newblock \url{https://archive.ics.uci.edu/ml/datasets/wine+quality}.

\bibitem[Dalitz(2009)]{Dalitz09Reject}
C.~Dalitz.
\newblock Reject options and confidence measures for knn classifiers.
\newblock \emph{Schriftenreihe des Fachbereichs Elektrotechnik und Informatik
  der Hochschule Niederrhein}, 8:\penalty0 16--38, 01 2009.

\bibitem[Ding et~al.(2020)Ding, Han, Liu, and Niethammer]{Ding2020}
Z.~Ding, X.~Han, P.~Liu, and M.~Niethammer.
\newblock Local temperature scaling for probability calibration.
\newblock \emph{CoRR}, abs/2008.05105, 2020.

\bibitem[Ehlers(2017)]{Ehlers17}
R.~Ehlers.
\newblock {Formal Verification of Piece-Wise Linear Feed-Forward Neural
  Networks}.
\newblock In D.~D'Souza and K.~N. Kumar, editors, \emph{Automated Technology
  for Verification and Analysis}, volume 10482 of \emph{Lecture Notes in
  Computer Science}, pages 269--286. Springer, 2017.

\bibitem[Einziger et~al.(2019)Einziger, Goldstein, Sa'ar, and
  Segall]{DBLP:conf/aaai/EinzigerGSS19}
G.~Einziger, M.~Goldstein, Y.~Sa'ar, and I.~Segall.
\newblock {Verifying Robustness of Gradient Boosted Models}.
\newblock In \emph{The Thirty-Third Conference on Artificial Intelligence,},
  pages 2446--2453. {AAAI}, 2019.

\bibitem[F.R.S.(1901)]{pca}
K.~Pearson F.R.S.
\newblock Liii. on lines and planes of closest fit to systems of points in
  space.
\newblock \emph{The London, Edinburgh, and Dublin Philosophical Magazine and
  Journal of Science}, 2\penalty0 (11):\penalty0 559--572, 1901.
\newblock \doi{10.1080/14786440109462720}.

\bibitem[Gal and Ghahramani(2016)]{pmlr-v48-gal16}
Y.~Gal and Z.~Ghahramani.
\newblock {Dropout as a Bayesian Approximation: Representing Model Uncertainty
  in Deep Learning}.
\newblock In \emph{Proceedings of The 33rd International Conference on Machine
  Learning}, volume~48 of \emph{Proceedings of Machine Learning Research},
  pages 1050--1059, 2016.

\bibitem[Gal et~al.(2017)Gal, Hron, and Kendall]{NIPS2017_84ddfb34}
Y.~Gal, J.~Hron, and A.~Kendall.
\newblock {Concrete Dropout}.
\newblock In \emph{Advances in Neural Information Processing Systems},
  volume~30, 2017.

\bibitem[Garipov et~al.(2018)Garipov, Izmailov, Podoprikhin, Vetrov, and
  Wilson]{FastEnsembling}
T.~Garipov, P.~Izmailov, D.~Podoprikhin, D.~P. Vetrov, and A.~G. Wilson.
\newblock {Loss Surfaces, Mode Connectivity, and Fast Ensembling of DNNs}.
\newblock In \emph{Advances in Neural Information Processing Systems 31: Annual
  Conference on Neural Information Processing Systems 2018, NeurIPS 2018},
  pages 8803--8812, 2018.

\bibitem[Gehr et~al.(2018)Gehr, Mirman, Drachsler-Cohen, Tsankov, Chaudhuri,
  and Vechev]{Gehr2018AISA}
T.~Gehr, M.~Mirman, D.~Drachsler-Cohen, P.~Tsankov, S.~Chaudhuri, and M.T.
  Vechev.
\newblock Ai2: Safety and robustness certification of neural networks with
  abstract interpretation.
\newblock pages 3--18, 05 2018.
\newblock \doi{10.1109/SP.2018.00058}.

\bibitem[Gu et~al.(2018)Gu, Wang, Kuen, Ma, Shahroudy, Shuai, Liu, Wang, Wang,
  Cai, and Chen]{CNN}
J.~Gu, Z.~Wang, J.~Kuen, L.~Ma, A.~Shahroudy, B.~Shuai, T.~Liu, X.~Wang,
  G.~Wang, J.~Cai, and T.~Chen.
\newblock {Recent Advances in Convolutional Neural Networks}.
\newblock \emph{Pattern Recognition}, 77:\penalty0 354--377, 2018.

\bibitem[Guo et~al.(2017{\natexlab{a}})Guo, Pleiss, Sun, and
  Weinberger]{pmlr-v70-guo17a}
C.~Guo, G.~Pleiss, Y.~Sun, and K.~Q. Weinberger.
\newblock {On Calibration of Modern Neural Networks}.
\newblock In \emph{Proceedings of the 34th International Conference on Machine
  Learning}, volume~70 of \emph{Proceedings of Machine Learning Research},
  pages 1321--1330, 2017{\natexlab{a}}.

\bibitem[Guo et~al.(2017{\natexlab{b}})Guo, Pleiss, Sun, and
  Weinberger]{CNNCalibration}
C.~Guo, G.~Pleiss, Y.~Sun, and K.Q. Weinberger.
\newblock {On Calibration of Modern Neural Networks}.
\newblock In \emph{Proceedings of the 34th International Conference on Machine
  Learning}, ICML, page 1321–1330, 2017{\natexlab{b}}.

\bibitem[Gupta and Ramdas(2021)]{gupta2021distribution}
C.~Gupta and A.~Ramdas.
\newblock {Distribution-free Calibration Guarantees for Histogram Binning
  without Sample Splitting}.
\newblock In \emph{International Conference on Machine Learning}, pages
  3942--3952. PMLR, 2021.

\bibitem[Hebert-Johnson et~al.(2018)Hebert-Johnson, Kim, Reingold, and
  Rothblum]{grouppingConfidence}
Ursula Hebert-Johnson, Michael Kim, Omer Reingold, and Guy Rothblum.
\newblock Multicalibration: Calibration for the
  ({C}omputationally-identifiable) masses.
\newblock In Jennifer Dy and Andreas Krause, editors, \emph{Proceedings of the
  35th International Conference on Machine Learning}, volume~80 of
  \emph{Proceedings of Machine Learning Research}, pages 1939--1948. PMLR,
  10--15 Jul 2018.
\newblock URL \url{https://proceedings.mlr.press/v80/hebert-johnson18a.html}.

\bibitem[Hendrycks et~al.(2019{\natexlab{a}})Hendrycks, Lee, and
  Mazeika]{pmlr-v97-hendrycks19a}
D.~Hendrycks, K.~Lee, and M.~Mazeika.
\newblock {Using Pre-Training Can Improve Model Robustness and Uncertainty}.
\newblock In \emph{Proceedings of the 36th International Conference on Machine
  Learning}, volume~97 of \emph{Proceedings of Machine Learning Research},
  pages 2712--2721, 2019{\natexlab{a}}.

\bibitem[Hendrycks et~al.(2019{\natexlab{b}})Hendrycks, Mazeika, Kadavath, and
  Song]{NEURIPS2019_a2b15837}
D.~Hendrycks, M.~Mazeika, S.~Kadavath, and D.~Song.
\newblock {Using Self-Supervised Learning Can Improve Model Robustness and
  Uncertainty}.
\newblock In \emph{Advances in Neural Information Processing Systems},
  volume~32, 2019{\natexlab{b}}.

\bibitem[Houben et~al.(2013)Houben, Stallkamp, Salmen, Schlipsing, and
  Igel]{GTSRB}
S.~Houben, J.~Stallkamp, J.~Salmen, M.~Schlipsing, and C.~Igel.
\newblock {Detection of Traffic Signs in Real-World Images: The {G}erman
  {T}raffic {S}ign {D}etection {B}enchmark}.
\newblock \emph{The 2013 International Joint Conference on Neural Networks
  (IJCNN)}, \penalty0 (1288):\penalty0 1--8, 2013.

\bibitem[Huang et~al.(2017{\natexlab{a}})Huang, Li, Pleiss, Liu, Hopcroft, and
  Weinberger]{SnapshotEnsemble}
G.~Huang, Y.~Li, G.~Pleiss, Z.~Liu, J.~E. Hopcroft, and K.~Q. Weinberger.
\newblock Snapshot ensembles: Train 1, get {M} for free.
\newblock In \emph{5th International Conference on Learning Representations,
  {ICLR}}, 2017{\natexlab{a}}.

\bibitem[Huang et~al.(2017{\natexlab{b}})Huang, Kwiatkowska, Wang, and
  Wu]{marta}
X.~Huang, M.~Kwiatkowska, S.~Wang, and M.~Wu.
\newblock {Safety Verification of Deep Neural Networks}.
\newblock In \emph{Computer Aided Verification}, pages 3--29,
  2017{\natexlab{b}}.

\bibitem[Katz et~al.(2017)Katz, Barrett, Dill, Julian, and
  Kochenderfer]{KatzBDJK17}
G.~Katz, C.W. Barrett, D.L. Dill, K.~Julian, and M.J. Kochenderfer.
\newblock {Reluplex: An Efficient {SMT} Solver for Verifying Deep Neural
  Networks}.
\newblock In \emph{Computer Aided Verification - 29th International Conference,
  {CAV}}, pages 97--117, 2017.

\bibitem[Kerrigan et~al.(2021)Kerrigan, Smyth, and
  Steyvers]{kerrigan2021combining}
G.~Kerrigan, P.~Smyth, and M.~Steyvers.
\newblock Combining human predictions with model probabilities via confusion
  matrices and calibration.
\newblock In \emph{Advances in Neural Information Processing Systems}, 2021.

\bibitem[Klein(2019)]{AIRLINE}
T.~Klein.
\newblock {Airline Passenger Satisfaction}.
\newblock 2019.
\newblock
  \url{https://www.kaggle.com/datasets/teejmahal20/airline-passenger-satisfaction/}.

\bibitem[Krishnan and Tickoo(2020)]{Krishnan2020Improving}
R.~Krishnan and O.~Tickoo.
\newblock {Improving Model Calibration with Accuracy versus Uncertainty
  Optimization}.
\newblock 33:\penalty0 18237--18248, 2020.

\bibitem[Krizhevsky et~al.(2009)Krizhevsky, Nair, and Hinton]{CIFAR10}
A.~Krizhevsky, V.~Nair, and G.~Hinton.
\newblock Cifar-10 (canadian institute for advanced research).
\newblock 2009.
\newblock URL \url{http://www.cs.toronto.edu/~kriz/cifar.html}.

\bibitem[Kull et~al.(2017)Kull, Silva, and Flach]{kull2017beta}
M.~Kull, F.~T. Silva, and P.~Flach.
\newblock Beta calibration: a well-founded and easily implemented improvement
  on logistic calibration for binary classifiers.
\newblock In \emph{Artificial Intelligence and Statistics}, pages 623--631.
  PMLR, 2017.

\bibitem[Kull et~al.(2019)Kull, Perello~Nieto, K\"{a}ngsepp, Silva~Filho, Song,
  and Flach]{NEURIPS2019_8ca01ea9}
M.~Kull, M.~Perello~Nieto, M.~K\"{a}ngsepp, T.~Silva~Filho, H.~Song, and
  P.~Flach.
\newblock {Beyond Temperature Scaling: Obtaining Well-calibrated Multi-class
  Probabilities with Dirichlet Calibration}.
\newblock In \emph{Advances in Neural Information Processing Systems},
  volume~32, 2019.

\bibitem[Kumar et~al.(2019)Kumar, Liang, and Ma]{Ana2019Verified}
A.~Kumar, P.~Liang, and Te. Ma.
\newblock {Verified Uncertainty Calibration}.
\newblock In \emph{{Advances in Neural Information Processing Systems
  (NeurIPS)}}, volume~32, 2019.

\bibitem[Küppers et~al.(2020)Küppers, Kronenberger, Shantia, and
  Haselhoff]{Kueppers_2020_CVPR_Workshops}
F.~Küppers, J.~Kronenberger, A.~Shantia, and A.~Haselhoff.
\newblock Multivariate confidence calibration for object detection.
\newblock In \emph{The IEEE/CVF Conference on Computer Vision and Pattern
  Recognition (CVPR) Workshops}, June 2020.

\bibitem[Lakshminarayanan et~al.(2017)Lakshminarayanan, Pritzel, and
  Blundell]{DeepEnsembles}
B.~Lakshminarayanan, A.~Pritzel, and C.~Blundell.
\newblock Simple and scalable predictive uncertainty estimation using deep
  ensembles.
\newblock In \emph{Proceedings of the 31st International Conference on Neural
  Information Processing Systems}, NIPS'17, page 6405–6416, 2017.

\bibitem[LeCun and Cortes(2010)]{MNIST}
Y.~LeCun and C.~Cortes.
\newblock {MNIST} handwritten digit database.
\newblock 2010.
\newblock \url{http://yann.lecun.com/exdb/mnist/}.

\bibitem[Leistner et~al.(2009)Leistner, Saffari, Roth, and Bischof]{Survey2}
C.~Leistner, A.~Saffari, P.~M. Roth, and H.~Bischof.
\newblock {On Robustness of On-line Boosting - a Competitive Study}.
\newblock In \emph{12th International Conference on Computer Vision Workshops,
  ICCV}, pages 1362--1369, 2009.

\bibitem[Leman et~al.(2022)Leman, Chouraqui, Cohen, and Einzinger]{Code}
L.~Leman, G.~Chouraqui, L.~Cohen, and G.~Einzinger.
\newblock {Geometric Uncertainty Github}.
\newblock \url{https://github.com/NoSleepDeveloper/Geometric-Calibrator}, 2022.

\bibitem[Ma et~al.(2021)Ma, Huang, Xian, Gao, and Xu]{pmlr-v161-ma21a}
C.~Ma, Z.~Huang, J.~Xian, M.~Gao, and J.~Xu.
\newblock {Improving Uncertainty Calibration of Deep Neural Networks via Truth
  Discovery and Geometric Optimization}.
\newblock In \emph{Proceedings of the Thirty-Seventh Conference on Uncertainty
  in Artificial Intelligence}, volume 161 of \emph{Proceedings of Machine
  Learning Research}, pages 75--85, 2021.

\bibitem[MacQueen(1967)]{kmeans}
J.~B. MacQueen.
\newblock Some methods for classification and analysis of multivariate
  observations.
\newblock In L.~M.~Le Cam and J.~Neyman, editors, \emph{Proc. of the fifth
  Berkeley Symposium on Mathematical Statistics and Probability}, volume~1,
  pages 281--297. University of California Press, 1967.

\bibitem[Maddox et~al.(2019)Maddox, Izmailov, Garipov, Vetrov, and
  Wilson]{SWAG}
W.~J. Maddox, P.~Izmailov, T.~Garipov, D.~P. Vetrov, and A.~G. Wilson.
\newblock {A Simple Baseline for Bayesian Uncertainty in Deep Learning}.
\newblock In \emph{Advances in Neural Information Processing Systems},
  volume~32, page~12, 2019.

\bibitem[Mason et~al.(1999)Mason, Baxter, Bartlett, and Frean]{GBTheory}
L.~Mason, J.~Baxter, P.~Bartlett, and M.~Frean.
\newblock {Boosting Algorithms As Gradient Descent}.
\newblock In \emph{Proceedings of the 12th International Conference on Neural
  Information Processing Systems}, NIPS, pages 512--518, 1999.

\bibitem[Moosavi-Dezfooli et~al.(2017)Moosavi-Dezfooli, Fawzi, Fawzi, and
  Frossard]{Robustness-Moosavi}
S.~Moosavi-Dezfooli, A.~Fawzi, O.~Fawzi, and P.~Frossard.
\newblock {Universal Adversarial Perturbations}.
\newblock In \emph{IEEE Conference on Computer Vision and Pattern Recognition
  (CVPR)}, pages 86--94, 2017.

\bibitem[Mosteller(1948)]{pool}
F.~Mosteller.
\newblock On pooling data.
\newblock \emph{Journal of the American Statistical Association}, 43\penalty0
  (242):\penalty0 231--242, 1948.
\newblock \doi{10.1080/01621459.1948.10483259}.
\newblock URL
  \url{https://www.tandfonline.com/doi/abs/10.1080/01621459.1948.10483259}.

\bibitem[Mozafari et~al.(2018)Mozafari, Gomes, Le{\~a}o, Janny, and
  Gagn{\'e}]{mozafari2018attended}
A.~S. Mozafari, H.~S. Gomes, W.~Le{\~a}o, S.~Janny, and C.~Gagn{\'e}.
\newblock Attended temperature scaling: a practical approach for calibrating
  deep neural networks.
\newblock \emph{arXiv preprint arXiv:1810.11586}, 2018.

\bibitem[M\"{u}ller et~al.(2019)M\"{u}ller, Kornblith, and
  Hinton]{NEURIPS2019_f1748d6b}
R.~M\"{u}ller, S.~Kornblith, and G.~E Hinton.
\newblock When does label smoothing help?
\newblock In \emph{Advances in Neural Information Processing Systems},
  volume~32, pages 4696--4705, 2019.

\bibitem[Naeini et~al.(2015)Naeini, Cooper, and
  Hauskrecht]{naeini2015obtaining}
M.P. Naeini, G.F. Cooper, and M.~Hauskrecht.
\newblock {Obtaining Well Calibrated Probabilities Using Bayesian Binning}.
\newblock In \emph{AAAI}, page 2901{\textendash}2907, 2015.

\bibitem[Narodytska et~al.(2018)Narodytska, Kasiviswanathan, Ryzhyk, Sagiv, and
  Walsh]{mooly}
N.~Narodytska, S.P. Kasiviswanathan, L.~Ryzhyk, M.~Sagiv, and T.~Walsh.
\newblock {Verifying Properties of Binarized Deep Neural Networks}.
\newblock In \emph{32nd AAAI Conference on Artificial Intelligence, AAAI 2018},
  pages 6615--6624, 2018.

\bibitem[Niculescu-Mizil and Caruana(2005)]{Niculescu2005Predicting}
A.~Niculescu-Mizil and R.~Caruana.
\newblock {Predicting Good Probabilities with Supervised Learning}.
\newblock In \emph{Proceedings of the 22nd International Conference on Machine
  Learning}, ICML, page 625–632, 2005.

\bibitem[Niculescu-Mizil and Caruana(2006)]{Survey1}
A.~Niculescu-Mizil and R.~Caruana.
\newblock {An Empirical Comparison of Supervised Learning Algorithms}.
\newblock In \emph{Proceedings of the 23rd International Conference on Machine
  Learning}, ICML, pages 161--168, 2006.

\bibitem[Pakdaman and Cooper(2016)]{naeini2016binary}
N.~Pakdaman and G.~Cooper.
\newblock Binary classifier calibration using an ensemble of near isotonic
  regression models.
\newblock In \emph{2016 IEEE 16th International Conference on Data Mining
  (ICDM)}, pages 360--369. IEEE, 2016.

\bibitem[Perez-Lebel et~al.(2022)Perez-Lebel, Morvan, and
  Varoquaux]{ByoundCalibration}
Alexandre Perez-Lebel, Marine~Le Morvan, and Ga{\"e}l Varoquaux.
\newblock {Beyond calibration: estimating the grouping loss of modern neural
  networks}.
\newblock working paper or preprint, December 2022.
\newblock URL \url{https://hal.archives-ouvertes.fr/hal-03829870}.

\bibitem[Platt(1999)]{platt1999probabilistic}
J.~Platt.
\newblock Probabilistic outputs for support vector machines and comparisons to
  regularized likelihood methods.
\newblock \emph{Advances in large margin classifiers}, 10\penalty0
  (3):\penalty0 61--74, 1999.

\bibitem[Smith(1981)]{bilinear}
P.R. Smith.
\newblock Bilinear interpolation of digital images.
\newblock \emph{Ultramicroscopy}, 6\penalty0 (1):\penalty0 201--204, 1981.
\newblock ISSN 0304-3991.
\newblock \doi{https://doi.org/10.1016/S0304-3991(81)80199-4}.
\newblock URL
  \url{https://www.sciencedirect.com/science/article/pii/S0304399181801994}.

\bibitem[Sun et~al.(2007)Sun, Todorovic, and Li]{Sun2007}
Y.~Sun, S.~Todorovic, and J.~Li.
\newblock {Increasing the Robustness of Boosting Algorithms within the
  Linear-programming Framework}.
\newblock \emph{The Journal of VLSI Signal Processing Systems for Signal,
  Image, and Video Technology}, 48\penalty0 (1):\penalty0 5--20, 2007.

\bibitem[Techperson(2017)]{SignLanguage}
Techperson.
\newblock {Sign Language MNIST}.
\newblock 2017.
\newblock \url{https://www.kaggle.com/datamunge/sign-language-mnist/}.

\bibitem[Thulasidasan et~al.(2019)Thulasidasan, Chennupati, Bilmes,
  Bhattacharya, and Michalak]{ThulasidasanCBB19}
S.~Thulasidasan, G.~Chennupati, J.~A. Bilmes, T.~Bhattacharya, and S.~Michalak.
\newblock {On Mixup Training: Improved Calibration and Predictive Uncertainty
  for Deep Neural Networks}.
\newblock In \emph{Advances in Neural Information Processing Systems 32: Annual
  Conference on Neural Information Processing Systems, NeurIPS}, pages
  13888--13899, 2019.

\bibitem[Tomani et~al.(2022)Tomani, Cremers, and
  Buettner]{tomani2022parameterized}
C.~Tomani, D.~Cremers, and F.~Buettner.
\newblock Parameterized temperature scaling for boosting the expressive power
  in post-hoc uncertainty calibration.
\newblock In \emph{European Conference on Computer Vision}, pages 555--569.
  Springer, 2022.

\bibitem[Wenger et~al.(2019)Wenger, Kjellstrm, and Triebel]{wenger2019}
J.~Wenger, H.~Kjellstrm, and R.~Triebel.
\newblock Non-parametric calibration for classification.
\newblock \emph{CoRR}, abs/1906.04933, 2019.

\bibitem[Xiao et~al.(2017)Xiao, Rasul, and Vollgraf]{fashion_MNIST}
H.~Xiao, K.~Rasul, and R.~Vollgraf.
\newblock {Fashion-MNIST: a Novel Image Dataset for Benchmarking Machine
  Learning Algorithms}.
\newblock \emph{CoRR}, abs/1708.07747, 2017.

\bibitem[Xing et~al.(2020)Xing, Arik, Zhang, and Pfister]{Xing2020distance}
C.~Xing, S.~Arik, Z.~Zhang, and T.~Pfister.
\newblock {Distance-Based Learning from Errors for Confidence Calibration}.
\newblock In \emph{International Conference on Learning Representations
  (ICLR)}, 2020.

\bibitem[Zadrozny and Elkan(2002)]{Zadorozny2002Transforming}
B.~Zadrozny and C.~Elkan.
\newblock {Transforming Classifier Scores into Accurate Multiclass Probability
  Estimates}.
\newblock \emph{Proceedings of the ACM SIGKDD International Conference on
  Knowledge Discovery and Data Mining}, page 694–699, 08 2002.

\bibitem[Zhang et~al.(2020)Zhang, Kailkhura, and Han]{pmlr-v119-zhang20k}
J.~Zhang, B.~Kailkhura, and T.~YJ Han.
\newblock {Mix-n-Match : Ensemble and Compositional Methods for Uncertainty
  Calibration in Deep Learning}.
\newblock In \emph{Proceedings of the 37th International Conference on Machine
  Learning}, volume 119 of \emph{Proceedings of Machine Learning Research},
  pages 11117--11128, 2020.

\bibitem[Zhang and Haghani(2015)]{transport}
Y.~Zhang and A.~Haghani.
\newblock {A Gradient Boosting Method to Improve Travel Time Prediction}.
\newblock \emph{Transportation Research Part C: Emerging Technologies},
  58:\penalty0 308--324, 2015.

\end{thebibliography}

\end{document}